\documentclass{article}

\usepackage{microtype}
\usepackage{graphicx}
\usepackage{subcaption}
\usepackage{booktabs} 
\usepackage{xcolor}
\usepackage[table]{xcolor}
\usepackage{wrapfig}
\usepackage{caption}
\usepackage{multirow}
\usepackage{adjustbox}

\usepackage{hyperref}

\usepackage[accepted]{icml2026}

\usepackage{amsmath}
\usepackage{amssymb}
\usepackage{mathtools}
\usepackage{amsthm}

\usepackage[capitalize,noabbrev]{cleveref}

\theoremstyle{plain}
\newtheorem{theorem}{Theorem}[section]
\newtheorem{proposition}[theorem]{Proposition}

\theoremstyle{definition}

\theoremstyle{remark}

\usepackage[textsize=tiny]{todonotes}

\icmltitlerunning{LimiX-2M: Mitigating Low-Rank Collapse and Attention Bottlenecks in Tabular Foundation Models}

\begin{document}

\twocolumn[
  \icmltitle{LimiX-2M: Mitigating Low-Rank Collapse and Attention Bottlenecks in Tabular Foundation Models}



  \icmlsetsymbol{equal}{*}
  \icmlsetsymbol{corr}{\ensuremath{\dagger}}

  \begin{icmlauthorlist}
    \icmlauthor{Yuanrui Wang}{equal,stableai,thu}
    \icmlauthor{Xingxuan Zhang}{equal,stableai,thu}
    \icmlauthor{Han Yu}{thu}
    \icmlauthor{Mingchao Hao}{stableai}
    \icmlauthor{Gang Ren}{stableai}
    \icmlauthor{Hao Yuan}{stableai}
    \icmlauthor{Li Mao}{stableai}
    \icmlauthor{Yunjia Zhang}{stableai}
    \icmlauthor{Chun Yuan}{thu}
    \icmlauthor{Peng Cui}{corr,stableai,thu}
  \end{icmlauthorlist}

  \icmlaffiliation{thu}{Tsinghua University}
  \icmlaffiliation{stableai}{Stable AI}

  \icmlcorrespondingauthor{Peng Cui}{cuip@tsinghua.edu.cn}

  \icmlkeywords{Machine Learning, ICML}

  \vskip 0.3in
]



\printAffiliationsAndNotice{
\textsuperscript{*}Equal contribution.
\textsuperscript{\ensuremath{\dagger}}Corresponding author.
}  

\begin{abstract}

Tabular foundation models (TFMs) increasingly rival tree ensembles, but their performance is often compute-inefficient: with standard affine scalar tokenization, each feature injects value variation through an essentially one-dimensional channel, and feature IDs/positional signals cannot increase within-feature value degrees of freedom, yielding weak early-layer value sensitivity and redundant hidden states. 
We present a unified \emph{tokenize-and-route} framework for strong TFMs: \textbf{RaBEL} expands each scalar into compact localized RBF features (optionally exponent-gated) to improve conditioning and shallow-layer effective rank, while a reordered bidirectional block \textbf{S$\rightarrow$N$\rightarrow$F} aligns computation with the readout by aggregating cross-sample context before feature mixing and using attention pooling. 
Together, these changes yield \textbf{LimiX-2M}, a 2M-parameter model that outperforms larger TabPFN-v2 and TabICL baselines on widely used tabular benchmarks while reducing training and inference costs. These results highlight value-aware tokenization and readout-aligned routing as key levers for improving the accuracy--efficiency trade-off in TFMs. Model checkpoints and inference code are available at \url{https://github.com/limix-ldm-ai/LimiX}.
\end{abstract}


\section{Introduction}

Recent advances in tabular foundation models—most notably TabPFN/TabPFN-v2 ~\citep{hollmann2025nature,hollmann2022tabpfn}, TabICL~\citep{qu2025tabicl}, and LimiX~\citep{zhang2025limix}, have narrowed, and in many regimes surpassed, long-standing baselines such as gradient-boosted decision trees~\citep{chen2016xgboost,ke2017lightgbm,prokhorenkova2018catboost} and specialized deep tabular architectures \citep{somepalli2021saint,arik2021tabnet}. These results are striking given the historical dominance of tree ensembles on medium-scale tabular tasks and the difficulty deep models have shown on irregular functions, heavy tails, and mixed data types \citep{grinsztajn2022whytree}.

Despite this progress, the input embedding layer remains a central limitation. The prevailing recipe maps each scalar cell through a single linear layer and augments it with a column identifier (e.g., positional or feature-ID embeddings), as in TabTransformer and FT-Transformer \citep{huang2020tabtransformer,gorishniy2021revisiting}. Systematic analysis indicates that such numeric embeddings are often overly restrictive and leave substantial performance on the table relative to more expressive encodings \citep{gorishniy2022numemb}. In our profiling, this design induces highly correlated activations early in the network: feature matrices in shallow layers can exhibit extremely low effective rank, sometimes collapsing to single-digit ranks on common benchmarks. This phenomenon implies significant parameter redundancy, suggesting that comparable performance could be achieved with a much smaller parameter budget. Moreover, it highlights untapped representational capacity: by rectifying this rank collapse to fully utilize the latent space, there is potential to unlock substantially richer feature representations.

We argue that the embedding layer for tabular FMs should play a greater role in introducing nonlinearity, enabling early representations to separate common tabular phenomena such as piecewise trends, local periodicity, quantization, heavy-tailed marginals, and heteroskedasticity. To this end, we propose RaBEL, a Radial Basis Embedding Layer that replaces the one-shot linear projection with a bank of localized nonlinear features. Classical theory and practice support RBF features as universal, localized approximators closely connected to kernel methods \citep{broomhead1988rbf,park1991rbfuniversal,scholkopf2002learning,rw2006gpml}. Compared to direct linear embeddings, the localized nature of RBFs
(i) yields diverse activation patterns across value regimes, (ii) improves conditioning of the first learned layer, and (iii) raises the effective rank of shallow representations without requiring many stacked layers to discover curvature ex post. The approach is complementary to periodic/Bochner-style mappings (e.g., random Fourier features) and can be extended or hybridized when periodicity is expected \citep{rahimi2007rff}.

Beyond embeddings, we identify a second limitation in the permutation order of bidirectional attention used by existing tabular foundation models. In widely used designs (e.g., TabPFN-style or LimiX-style stacks), attention is typically arranged as feature-attention \(\rightarrow\) sample-attention. This ordering introduces two problems. \textbf{(1)} In the very first layer, feature-level attention must integrate across columns based solely on raw values, before any column-level statistics or correlations have been established; this deprives attention of informative context and exacerbates low-rank collapse. \textbf{(2)} During prediction, many architectures consume only the target token from the final layer and thus the sample-level attention computed over features are effectively ignored, leading to weak training signals for parts of the network that do not directly influence the readout.

We address these issues by reordering the attention stack to sample-attention \(\rightarrow\) (FFN) \(\rightarrow\) feature-attention. The sample-attention phase at the input stage allows the model to aggregate column-level correlations and distributional statistics (e.g., moments, prevalence, missingness patterns) before engaging feature-level attention. An intermediate feed-forward network (FFN) then compresses and conditions these signals, after which the feature-attention phase learns inter-feature relations using richer, better-conditioned inputs. This permutation ensures that all attention computations contribute to the final prediction: information assembled at the sample-level directly shapes the feature-level representations that flow to the readout. This refined mechanism improves the capture of feature relationships and encourages the model to discover critical features, a capability we further analyze in the \Cref{sec:toy_rba}.

With the combination of RaBEL and the reordered sample-attention → FFN → feature-attention architecture, we introduce a 2M-parameter model, named \textbf{LimiX-2M}, that surpasses the 7M-parameter TabPFN-v2 baseline on mainstream benchmarks while cutting computational costs for both training and inference. These results indicate that principled nonlinear embeddings coupled with attention-order redesign can unlock better accuracy–efficiency trade-offs and more reliable scaling for tabular foundation models.

We summarize our contributions as follows. 
\begin{enumerate}
\item We diagnose and quantify the low-rank collapse induced by linear+ID embeddings, and propose RaBEL, a compact RBF-based cell encoder that raises shallow-layer rank and improves conditioning. 
\item We reveal a permutation-order pathology in standard bidirectional attention (feature-attention \(\rightarrow\) sample-attention), and introduce a sample-attention \(\rightarrow\) FFN \(\rightarrow\) feature-attention stack that (i) establishes column-level statistics before feature aggregation and (ii) routes all attention signals to the readout. 
\item We introduce LimiX-2M, a 2M-parameter model building on these two components that outperforms TabPFN-v2 with 7M-parameter and TabICL with 27M-parameter on most benchmarks while reducing both training and inference cost.
\end{enumerate}

\paragraph{Conflict of Interest Disclosure.}
Some authors are affiliated with Stable AI. Stable AI leads the development of the LimiX model family, including LimiX-16M, one of the models evaluated in this paper.

\section{Related Work}
\paragraph{Deep models for tabular data.}
Gradient-boosted decision trees (GBDTs) such as XGBoost, LightGBM, and CatBoost have long dominated tabular prediction \citep{chen2016xgboost,ke2017lightgbm,prokhorenkova2018catboost}. In response, specialized neural architectures have been proposed. TabNet performs attentive feature selection with interpretability \citep{arik2021tabnet}, TabTransformer applies self-attention over features—especially effective for categorical inputs \citep{huang2020tabtransformer}, and SAINT augments feature-wise attention with intersample (row-wise) attention and contrastive pre-training \citep{somepalli2021saint}. Despite these advances, broad evaluations still often find trees competitive on medium-scale benchmarks \citep{grinsztajn2022whytree}, underscoring the challenges of mixed data types and irregular target functions and motivating foundation-style approaches.
\paragraph{Tabular foundation models.}
TabPFN reframes tabular learning as in-context inference: a Transformer pre-trained on synthetic tasks consumes a small training set at inference and predicts without gradient updates \citep{hollmann2022tabpfn,hollmann2025nature}. TabICL adopts a two-stage design that first builds per-sample representations with feature-then-row attention, followed by efficient in-context reasoning \citep{qu2025tabicl}. LimiX treats a table as a joint distribution over features and missingness and uses masked modeling to support many tasks in one model \citep{zhang2025limix}. Contemporary systems such as Mitra \citep{zhang2025mitra} explore hybrid row–column attention with synthetic priors to improve cross-dataset generalization. 
\paragraph{Embedding strategies for numerical features.}
A key design choice is how to encode continuous-valued cells. A prevalent recipe applies a single linear projection per numeric column, sometimes with column-identity embeddings, as in FT-Transformer and TabTransformer \citep{gorishniy2021revisiting,huang2020tabtransformer}. Systematic analyses show that such encodings can be restrictive relative to more expressive schemes \citep{gorishniy2022numemb}. Effective remedies include (i) piecewise-linear encodings that partition value ranges into learnable segments and (ii) periodic encodings via sinusoidal features, conceptually related to random Fourier features \citep{gorishniy2022numemb,rahimi2007rff}. Complementary self-supervised pre-training (e.g., VIME, MET) uses masked reconstruction to capture inter-feature dependencies prior to supervised fine-tuning \citep{yoon2020vime,majmundar2022met}. Kernel-inspired alternatives based on radial basis functions (RBFs) offer localized receptive fields and universal approximation \citep{broomhead1988rbf,park1991rbfuniversal}, closely connected to Gaussian RBF kernels and Gaussian processes \citep{scholkopf2002learning,rw2006gpml}. Learned RBF featurization thus provides localized nonlinear transformations that complement global periodic mappings like random Fourier features and can be naturally integrated into transformer-based tabular backbones.
\section{The Low Rank Problem in Current Embeddings}

\begin{figure}[htbp]
	\centering
        \includegraphics[width=\linewidth]{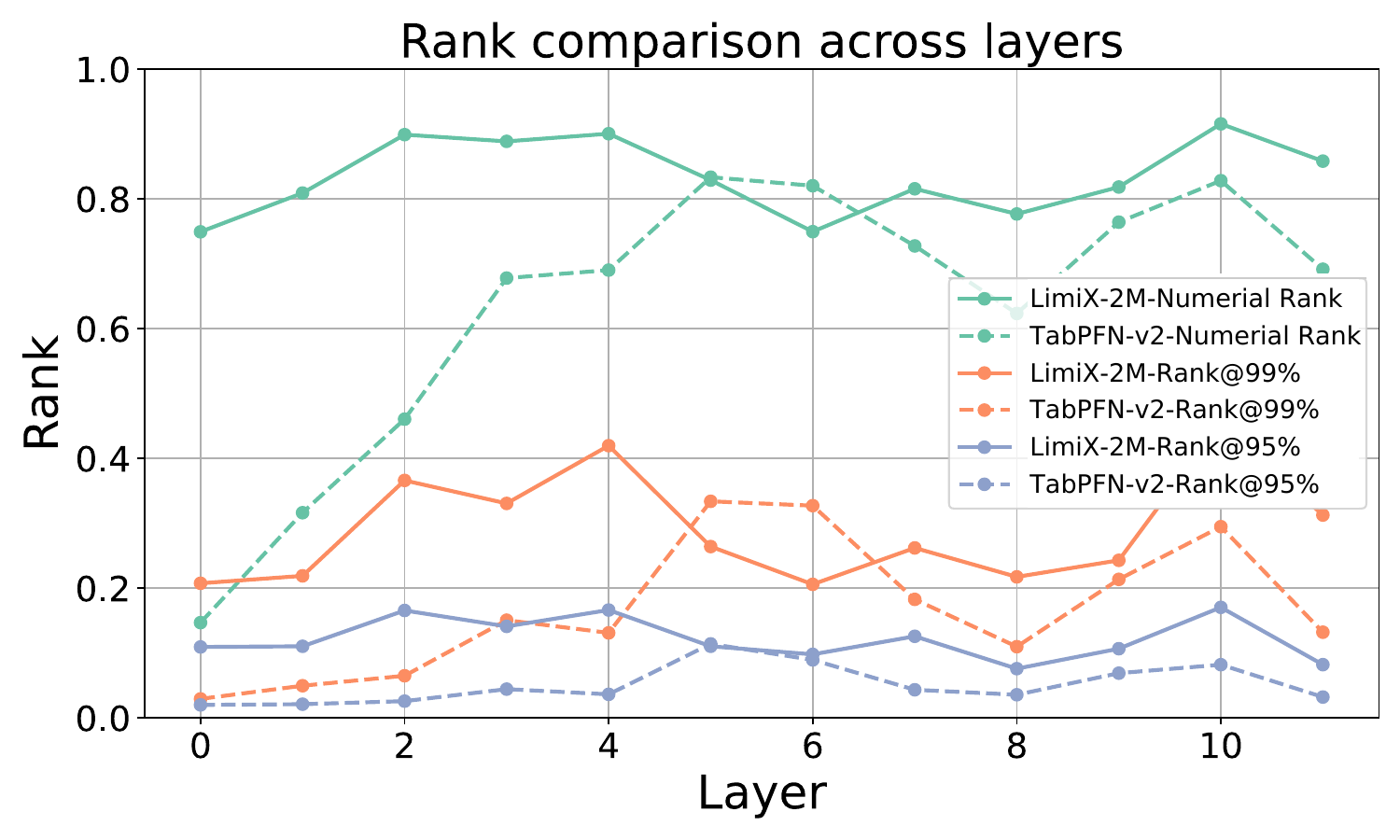}
	\caption{Rank comparison across layers of LimiX-2M and TabPFN-v2. The metric Rank@99\% and Rank@95\% represents the minimum number of SVD components required to take up 99\% or 95\% energy measured by singular values.}
	\label{fig:rank}
\end{figure}

The current embedding strategy adopted by TabPFN-v2 is simply mapping each cell, i.e. each scalar, to the high-dimensional hidden space via a $1\times p$ linear projection. 
Such a straightforward strategy implicitly leads to the low-rank problem: The hidden states output by transformer layers tend to be low-rank, especially early in the network. 
This could severely decrease the expressivity of the network, leading to potential performance degradation.
In this section, we conduct both theoretical and experimental analyses to reveal the low-rank problem in TabPFN-v2.

First we theoretically prove the existence of the low-rank problem.

\begin{proposition}
\label{prop:rank}
    Assume the linear embedding strategy without positional embeddings, given a scalar sequence $x_1, x_2, ..., x_n\in\mathbb{R}$, the rank of the embedded input matrix is at most 2. 
    For standard self-attention, the rank of the output matrix is at most 2.
    For multi-head attention with the number of heads $H$, the rank of the output is at most $H+1$.
\end{proposition}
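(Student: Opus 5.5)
The plan is to model the linear embedding as an affine map $x_i \mapsto x_i w + b$ with $w,b\in\mathbb{R}^{1\times p}$. The affine term is what makes the bound $2$ rather than $1$; with no bias the same argument gives rank $1$. Stacking the rows gives the embedded matrix
\[
E = x w + \mathbf{1} b \in \mathbb{R}^{n\times p}, \qquad x=(x_1,\dots,x_n)^\top .
\]
This is a sum of two outer products, so $\operatorname{rank}(E)\le 2$. Equivalently, every row of $E$ lies in $\operatorname{span}\{w,b\}$, which makes the first claim immediate.

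For standard single-head self-attention, write the output as
\[
\mathrm{Attn}(E)=A\,E W_V,
\]
where $A=\mathrm{softmax}(EW_QW_K^\top E^\top/\sqrt{d})$ is an $n\times n$ row-stochastic matrix. The key point is that, however complicated $A$ is (and it is certainly not low rank), the output is a left-multiple of $EW_V$. Hence
\[
\operatorname{rank}(A E W_V)\le \operatorname{rank}(E)\le 2 .
\]
Concretely, $AEW_V=(Ax)(wW_V)+(A\mathbf{1})(bW_V)$, and $A\mathbf{1}=\mathbf{1}$. If the output projection $W_O$ or a residual connection is included, I will address it separately. Right-multiplying by $W_O$ does not raise rank. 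The residual $E+AEW_V$ is a sum of two matrices, but I will observe that its column structure is $\{x,\mathbf{1},Ax\}$ and its row space is contained in $\operatorname{span}\{w,b,wW_V,bW_V\}$. The statement presumably refers to the attention output alone, and I will state that assumption explicitly.

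For multi-head attention with $H$ heads, each head $h$ produces
\[
A_h E W_V^{(h)} = (A_h x)(wW_V^{(h)}) + \mathbf{1}(bW_V^{(h)}),
\]
again using that $A_h$ is row-stochastic. After concatenation and the output projection $W_O=[W_O^{(1)};\dots;W_O^{(H)}]$, the output is
\[
\sum_h (A_h x)\,(wW_V^{(h)}W_O^{(h)}) \;+\; \mathbf{1}\Big(\sum_h bW_V^{(h)}W_O^{(h)}\Big).
\]
This is a sum of $H+1$ outer products, because the $H$ bias terms all share the common column vector $\mathbf{1}$ and collapse into a single rank-one term. Therefore the rank is at most $H+1$. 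For plain concatenation without $W_O$, the same column-space argument applies: every column lies in $\operatorname{span}\{A_1x,\dots,A_Hx,\mathbf{1}\}$.

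The main obstacle is purely one of setup. I need to fix precisely which convention the statement intends: whether the embedding has a bias, and whether the residual or layer norm is excluded. With no bias, the bounds would be $1$ and $H$. The step that really needs care is the use of $A\mathbf{1}=\mathbf{1}$ to merge the bias contributions across heads, since that is exactly what yields $H+1$ rather than $2H$.
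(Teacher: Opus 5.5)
Your proposal is correct and follows essentially the same route as the paper: the rows of the embedding lie in $\mathrm{span}\{w,b\}$, and each head's output decomposes as $(A_h x)(wW_V^{(h)})+\mathbf{1}(bW_V^{(h)})$ because softmax rows sum to one. After the output projection the $H$ bias contributions merge into a single vector, giving rank at most $H+1$. The paper also expands the attention logits as $\lambda_1x_ix_j+\lambda_2x_i+\lambda_3x_j+\lambda_4$, which is not needed; your $\operatorname{rank}(AEW_V)\le\operatorname{rank}(E)$ argument for the single head is a cleaner shortcut, and your choice to exclude the residual matches the paper's convention.
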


The detailed proof can be referred to in Appendix \ref{appendix:proof}.
\Cref{prop:rank} reveals that under the linear embedding strategy, the embedded input and the hidden states of shallower transformer layers could be extremely low-rank if no other non-linearity is induced. This is highly inefficient for such a high-dimensional hidden space.

To investigate the redundancy of latent representations, we analyze the spectral properties of hidden states across the OpenML-CC18 benchmark. Let $X \in \mathbb{R}^{B \times S \times D}$ denote the hidden states of a given layer, where $B$, $S$, and $D$ correspond to the batch size, sequence length, and hidden dimension, respectively. We reshape $X$ into a 2D matrix $\hat{X} \in \mathbb{R}^{(B \cdot S) \times D}$ to perform Singular Value Decomposition (SVD).

In \Cref{fig:rank}, we report the numerical rank, determined by the number of singular values exceeding a standard tolerance threshold. Additionally, we introduce Rank@99\% and Rank@95\%, defined as the minimum number of singular components required to preserve 99\% and 95\% of the cumulative spectral energy, respectively. All rank metrics are normalized by the hidden dimension $D$ for consistency.

To further validate the extent of representation redundancy, we conduct low-rank approximation experiments on TabPFN-v2. The architecture comprises a 12-layer network, where each layer contains three distinct modules: feature attention, sample attention, and a feed-forward network (FFN), totaling 36 modules. For each module during the forward pass, we apply truncated SVD to the input matrix, retaining only the top-$r$ singular values to reconstruct the low-rank approximation.

As shown in \Cref{tab:rank-approx}, the model exhibits negligible performance degradation compared to the full-rank baseline, even when the rank is truncated to $r=50$ (approximately $25\%$ of the hidden dimension). Furthermore, the model maintains competitive AUC scores even at $r=20$ ($\approx 10\%$ of the hidden dimension). These findings indicate a severe underutilization of the high-dimensional latent space, underscoring the necessity for more efficient embedding strategies.

In Appendix ~\ref{sec:valuesensitivity}, we theoretically show that under the standard affine scalar tokenizer, feature IDs/positional signals can separate columns but cannot increase the intrinsic degrees of freedom through which values enter the mode, yielding weak early-layer value sensitivity and redundant representations. RaBEL breaks this value bottleneck by expanding each scalar into a compact localized basis before projection, which increases value diversity at the input and motivates mechanism-driven diagnostics (value-centered effective rank and Jacobian rank) that predict the observed accuracy–efficiency gains.
\begin{table*}[t]
\centering
\caption{AUC of TabPFN-v2 after low-rank approximation. We can see that the model consistently shows a competitive performance as the rank decreases to 20.}
\label{tab:rank-approx}
\resizebox{0.7\textwidth}{!}{%
\begin{tabular}{@{}c|ccccccccc@{}}
\toprule
Rank & 192    & 100    & 75     & 50     & 40     & 30     & 20     & 10     & 5      \\ \midrule
AUC  & 0.9177 & 0.9179 & 0.9175 & 0.9143 & 0.9100 & 0.9052 & 0.8985 & 0.8636 & 0.7674 \\ \bottomrule
\end{tabular}%
}
\end{table*}

\section{Method}
\label{sec:method}
\subsection{Problem Setup and Notation}
Let $X \in \mathbb{R}^{N \times D}$ denote a tabular dataset with $N$ samples (rows) and $D$ features (columns). We write $x_{i,j}$ for the scalar value at sample $i$ and feature $j$. Our goal is to map each cell $x_{i,j}$ to an embedding $e_{i,j} \in \mathbb{R}^{d}$ that is fed into a transformer with bidirectional attention operating along the sample and feature axes. We denote by $E \in \mathbb{R}^{N \times D \times d}$ the full tensor of cell embeddings, and by $L$ the number of attention blocks in the backbone.

This section introduces (i) a compact radial-basis embedding layer, \textbf{RaBEL}, that replaces the standard linear projection for scalar cells, and (ii) a reordered bidirectional attention block, from \textbf{Feature-Attention $\rightarrow$ Sample-Attention $\rightarrow$ FFN } (abbrev.\ F$\rightarrow$S$\rightarrow$N) to \textbf{Sample-Attention $\rightarrow$ FFN $\rightarrow$ Feature-Attention} (abbrev.\ S$\rightarrow$N$\rightarrow$F), that improves conditioning and ensures all attention computations contribute to the final prediction.

\subsection{RaBEL: Radial-Basis Embedding Layer}

Direct linear projection of numeric cells produces highly correlated early activations and very low effective rank in the first layers, which inhibits downstream capacity. RaBEL front-loads nonlinearity by expanding each scalar into a small bank of localized responses, followed by a light projection to the model dimension.

\paragraph{Per-column normalization.}
For numerical stability, we standardize each column using the z-score. Let $\bar{x}_j$ and $s_j$ denote the sample mean and standard deviation:
\begin{align}
    \tilde{x}_{i,j} &= \frac{x_{i,j}-\bar{x}_j}{s_j + \epsilon}, \quad \bar{x}_j = \frac{1}{N}\sum_{i=1}^N x_{i,j}, \\
    s_j &= \sqrt{\frac{1}{N-1}\sum_{i=1}^N \big(x_{i,j}-\bar{x}_j\big)^2 }.
\end{align}
Here, $\epsilon>0$ is a small constant for numerical stability. All formulas below apply to either $x_{i,j}$ or $\tilde{x}_{i,j}$; we drop the tilde for brevity.

\paragraph{RBF expansion.}
For each feature $j$, we choose $M$ centers $\{c_{j,m}\}_{m=1}^{M}$ and bandwidths $\{\sigma_{j,m}\}_{m=1}^{M}$. Centers are initialized at empirical quantiles of $X_{\cdot,j}$; bandwidths are initialized proportional to local variability (e.g., interquartile range), and all parameters are subsequently learned end-to-end. Let the $m$-th component be $\kappa_{j,m} = \exp\big(-\frac{(x_{i,j}-c_{j,m})^2}{2\sigma_{j,m}^2}\big)$. The radial-basis feature map is
\begin{equation}
    \phi_{j}(x_{i,j}) = 
    \big[ \kappa_{j,1}, \ldots, \kappa_{j,M} \big] \in \mathbb{R}^{M}.
\end{equation}

\paragraph{Projection to model dimension.}
A shared linear projection with LayerNorm maps the expanded features to the model width:
\begin{equation}
    e_{i,j} = \mathrm{LN}\!\big( W_{\mathrm{rbf}} \, \phi_{j}(x_{i,j}) + b_{\mathrm{rbf}} \big) \in \mathbb{R}^{d},
\end{equation}
where $W_{\mathrm{rbf}} \in \mathbb{R}^{d \times M}$ and $b_{\mathrm{rbf}} \in \mathbb{R}^{d}$ are learned and shared across columns. For categorical columns, we use a standard entity embedding lookup into $\mathbb{R}^{d}$, followed by the same LayerNorm.

\paragraph{Exponent-Gated RaBEL with \textit{shared gates}.}
Real-world tabular values span orders of magnitude and often exhibit heteroskedasticity. To make RaBEL \emph{scale-aware} without losing locality, we introduce an \textbf{exponent gate} that conditions the \emph{parameters} of the RBF bank via \emph{shared} scalar gates applied uniformly across all $M$ basis functions.

\textit{Exponent extraction (soft, differentiable).}
Fix a base $\beta>1$ (we use $\beta=2$) and a small offset $\tau>0$. Define the log-magnitude $\ell_{i,j} = \log_{\beta}(|x_{i,j}|+\tau)$. Let $\mathcal{B}=\{b_{\min},\ldots,b_{\max}\}\subset\mathbb{Z}$ be a bounded set of exponent bins. We form a soft assignment via a temperature-controlled kernel:
\begin{equation}
    \pi_{i,j}(b) = \frac{\exp\!\big(-(\ell_{i,j}-b)^2/T\big)}{\sum_{b'\in\mathcal{B}} \exp\!\big(-(\ell_{i,j}-b')^2/T\big)}, \quad b\in\mathcal{B}.
\end{equation}
Each bin $b$ has a learnable embedding $u_b\in\mathbb{R}^{h}$, and we include a sign embedding $u_{\mathrm{sgn}}$ for $\mathrm{sgn}(x_{i,j})$. We obtain the \emph{scale context} vector $z^{\exp}_{i,j} \in \mathbb{R}^{h+h_s}$ by concatenation:
\begin{equation}
    z^{\exp}_{i,j} = \Big(\sum_{b\in\mathcal{B}} \pi_{i,j}(b) \, u_b \Big) \;\Big\|\; u_{\mathrm{sgn}(x_{i,j})}.
\end{equation}

\textit{Shared gates on $(c,\sigma)$.}
A small MLP produces two positive scalars per cell, shared across the RBF bank:
\begin{equation}
    [\gamma^{c}_{i,j},\,\gamma^{\sigma}_{i,j}] = \mathrm{MLP}_{\mathrm{shared}}(z^{\exp}_{i,j}) \in \mathbb{R}^{2},
\end{equation}
where positivity is enforced via Softplus. We compute the exponent-conditioned centers and widths:
\begin{equation}
    c^{(\exp)}_{j,m} = \gamma^{c}_{i,j}\, c_{j,m}, \quad 
    \sigma^{(\exp)}_{j,m} = \gamma^{\sigma}_{i,j}\, \sigma_{j,m}.
\end{equation}
Let the gated RBF component be defined as:
\begin{equation}
    \kappa^{(\exp)}_{j,m} = \exp\!\left(-\frac{(x_{i,j}-c^{(\exp)}_{j,m})^2}{2(\sigma^{(\exp)}_{j,m})^2}\right).
\end{equation}
The final gated feature vector is then:
\begin{equation}
    \phi^{(\exp)}_{j}(x_{i,j}) = \big[ \kappa^{(\exp)}_{j,1}, \ldots, \kappa^{(\exp)}_{j,M} \big].
\end{equation}
Finally, we apply the projection $e_{i,j} = \mathrm{LN}(W_{\mathrm{rbf}}\phi^{(\exp)}_j(x_{i,j})+b_{\mathrm{rbf}})$.

Exponent gating brings three benefits:
(i) \textbf{Scale equivariance}: multiplying inputs by $\beta$ shifts $\ell$ by $1$, thus the gate adapts the RBF bank across orders of magnitude, yielding unit- and scale-robust embeddings;
(ii) \textbf{Heteroskedasticity robustness}: widths and amplitudes expand/contract in high/low variance regimes, maintaining useful locality of the bumps;
(iii) \textbf{Better conditioning}: separating magnitude (via the exponent pathway) from pattern within a decade (via RBF responses) produces higher effective rank and smoother gradients in early layers.
The soft assignment $\pi_{i,j}$ makes the module fully differentiable and avoids brittle hard binning. The computational overhead is small: an extra $O(|\mathcal{B}|)$ kernel evaluation and a tiny two-layer MLP.

\subsection{Reordered Bidirectional Attention}
\label{sec:rba}
Tabular transformers bidirectional-attention typically alternate attention across features (per sample) and across samples (per feature). Common stacks has the order of \textbf{FSN}(feature-attention $\rightarrow$ sample-attention $\rightarrow$ feed-forward). This forces the initial feature-level attention to integrate across columns using raw, weakly conditioned values. Moreover, the final prediction consumes only the target token, leaving other feature embeddings and thus the last sample-level attention computations underutilized.

We modify the bidirectional block by reordering its modules—without introducing any additional components—to \textbf{SNF}(sample-attention $\rightarrow$ feed-forward $\rightarrow$ feature-attention). In this design, the sample-attention layer first aggregates column-level statistics and cross-sample regularities, a lightweight feed-forward network conditions these signals, and the feature-attention layer then models inter-feature relations on better-conditioned inputs. The final embedding for prediction is obtained via attention pooling over all feature tokens, so every attention computation contributes directly to the output. This mechanism better captures feature interactions and promotes the discovery of key attributes, as analyzed in \Cref{sec:toy_rba}.

\section{Experiments}
\subsection{Embedding Comparison}
\subsubsection{MLP-based Methods}
We benchmark RaBEL against four baselines: No-embedding, MLP, PLE, and Periodic across diverse datasets (\Cref{tab:dataset_stats}). For the embedding methods, we adopt a unified formulation where each scalar input $x_{j,i}$ is first transformed by a function $\phi_\theta: \mathbb{R} \rightarrow \mathbb{R}^d$, then flattened and mapped to the final dimension $e$ via a linear layer. The methods differ solely in $\phi_\theta$: \textbf{MLP} employs a point-wise FFN; \textbf{PLE} and \textbf{Periodic} utilize piecewise-linear and sinusoidal encodings \citep{gorishniy2022numemb}, respectively. Note that we modify PLE and Periodic to use \textit{shared} linear projections to map encoded values to $\mathbb{R}^d$, rather than feature-specific ones. This adaptation accommodates varying feature counts, facilitating their subsequent application in foundation models. The \textbf{No-embedding} baseline directly projects the raw input $\mathbf{x}$ to $\mathbb{R}^{s \times e}$. Results are detailed in \Cref{tab:mlp_exp_rst}.

\begin{table*}[h]
\centering
\small
\caption{Results for MLP with different embedding modules. The upper panel reports results for Metric1, and the lower panel reports results for Metric2. For each dataset, the arrow indicates whether higher or lower values are better for the corresponding metric.}
\adjustbox{width=0.95\textwidth}{%
\begin{tabular}{l|ccccccccccccc}
\toprule
\textbf{Metric1} &
 GC ($\uparrow$) & CP ($\uparrow$) & AC ($\uparrow$) & CB ($\uparrow$) & UK ($\uparrow$) & BN ($\uparrow$) & MA ($\uparrow$) & CD ($\uparrow$) & MH ($\uparrow$) & CC ($\uparrow$) & HS ($\uparrow$) & SC ($\uparrow$) & MV ($\uparrow$) \\
\midrule
MLP & 0.7537 & 0.8092 & 0.6560 & 0.8319 & 0.9850 & 0.7520 & \textbf{0.8896} & 0.4476 & 0.7811 & 0.6158 & 0.7497 & 0.1799 & \textbf{0.9999} \\
MLP-MLP & 0.8984 & 0.8496 & 0.8398 & 0.8924 & \textbf{0.9863} & 0.7518 & 0.8577 & 0.4768 & 0.8618 & 0.8947 & 0.8375 & 0.1749 & 0.9990 \\
MLP-PLE & 0.7393 & 0.8031 & 0.8262 & 0.8817 & 0.8416 & 0.7389 & 0.8609 & 0.2969 & 0.8281 & 0.8576 & 0.8279 & \textbf{0.1815} & 0.9730 \\
MLP-Periodic & 0.7817 & 0.8895 & 0.8054 & 0.8926 & 0.9821 & \textbf{0.7586} & 0.8625 & 0.4095 & 0.8306 & \textbf{0.9068} & 0.8286 & 0.1786 & 0.9994 \\
MLP-RaBEL & \textbf{0.9831} & \textbf{0.9582} & \textbf{0.9061} & \textbf{0.8979} & 0.9677 & 0.7580 & 0.8789 & \textbf{0.5124} & \textbf{0.8818} & 0.8998 & \textbf{0.8401} & 0.1813 & 0.9995 \\
\midrule
\textbf{Metric2} &
 GC ($\uparrow$) & CP ($\uparrow$) & AC ($\uparrow$) & CB ($\uparrow$) & UK ($\uparrow$) & BN ($\uparrow$) & MA ($\uparrow$) & CD ($\downarrow$) & MH ($\downarrow$) & CC ($\downarrow$) & HS ($\downarrow$) & SC ($\downarrow$) & MV ($\downarrow$) \\
\midrule
MLP & 0.3483 & 0.6756 & 0.6750 & 0.9677 & \textbf{0.9008} & 0.5757 & 0.9611 & 0.7218 & 0.4862 & 0.6072 & 0.5259 & 0.9051 & \textbf{0.0119} \\
MLP-MLP & 0.5393 & 0.7224 & 0.7870 & \textbf{0.9692} & \textbf{0.9008} & 0.5766 & 0.9657 & 0.7025 & 0.3863 & 0.3178 & 0.4238 & 0.9078 & 0.0321 \\
MLP-PLE & 0.2584 & 0.6890 & 0.7750 & 0.9677 & 0.5868 & 0.5596 & 0.9654 & 0.8143 & 0.4308 & 0.3697 & 0.4361 & \textbf{0.9042} & 0.1642 \\
MLP-Periodic & 0.3371 & 0.7759 & 0.7637 & 0.9677 & 0.8595 & \textbf{0.5788} & 0.9666 & 0.7463 & 0.4277 & \textbf{0.2991} & 0.4353 & 0.9058 & 0.0246 \\
MLP-RaBEL & \textbf{0.8090} & \textbf{0.8495} & \textbf{0.8377} & \textbf{0.9692} & 0.8678 & 0.5758 & \textbf{0.9668} & \textbf{0.6781} & \textbf{0.3573} & 0.3101 & \textbf{0.4203} & 0.9043 & 0.0221 \\
\bottomrule
\end{tabular}
}
\label{tab:mlp_exp_rst}
\end{table*}
\begin{table}[htbp]
    \centering
    \small
    \caption{Results on BCCO-CLS with different embeddings.}
    \begin{tabular}{lccc}
        \toprule
        Embedding & AUC ($\uparrow$) & Acc. ($\uparrow$) & F1 ($\uparrow$) \\
        \midrule
        Transformer+MLP   & 83.52 & 76.82 & 66.57 \\
        Transformer+Periodic   & 83.88 & 77.80 & 68.65 \\
        Transformer+PLE   & 84.66 & 77.68 & 67.74 \\
        Transformer+RaBEL & \textbf{85.04} & \textbf{77.99} & \textbf{69.01} \\
        \bottomrule
    \end{tabular}
    \vspace{-0.2cm}
    \label{tab:foundation_exp_cls}
\end{table}

\begin{table}
    \centering
    \small
    \caption{Results on BCCO-REG with different embeddings.}
    \begin{tabular}{lcc}
        \toprule
        Embedding & R\textsuperscript{2} ($\uparrow$) & RMSE ($\downarrow$) \\
        \midrule
        Transformer+MLP   & 0.7731 & 0.4043 \\
        Transformer+Periodic   & 0.6859 & 0.4321 \\
        Transformer+PLE   & 0.7410 & 0.4216 \\
        Transformer+RaBEL & \textbf{0.7792} & \textbf{0.3964} \\
        \bottomrule
    \end{tabular}
    \vspace{-0.8cm}
    \label{tab:foundation_exp_reg}
\end{table}

\begin{figure}[htbp]
    \centering
    \includegraphics[width=1\linewidth]{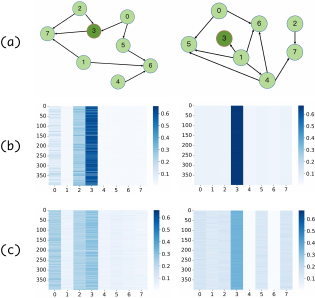}
    \caption{Visualization of attention scores (a) DAGs of the generated datasets.  (b) Feature Attention Heatmap of FSN. (c) Feature Attention Heatmap of SNF. While FSN is dominated by self-attention, SNF demonstrates a broader attentional span that effectively targets neighboring features.}
    \label{fig:toy_rba}
    \vspace{-0.8cm}
\end{figure}

\subsubsection{Transformer-based Methods}
We integrate different embedding methods into a 2M-parameter transformer backbone with the same settings as in the previous section. Following the foundation-model training paradigm (training on generated data), we evaluate them on BCCO-CLS and BCCO-REG \citep{zhang2025limix}. Results can be found in \Cref{tab:foundation_exp_cls} and \Cref{tab:foundation_exp_reg}.

\subsection{Comparison between LimiX-2M and 2M Baseline}
To verify that the observed improvements stem specifically from RaBEL rather than the experimental optimization settings adopted from LimiX, we conduct a comparative analysis between LimiX-2M and a 2M-parameter baseline. While both models share identical training configurations and SNF layer sequences, they differ in their embedding mechanisms: the baseline utilizes a standard Linear projection, whereas LimiX-2M employs RaBEL. We report the rank comparison of the first three layers in \Cref{tab:rank_comp}, where the results demonstrate that LimiX-2M yields a significant rank boost in the shallow layers compared to the baseline. Furthermore, a comprehensive dataset-level performance comparison involving the baseline, LimiX-2M, and other competing models is detailed in \Cref{appendix:dataset-level comparison}.
\begin{table}[h]
\centering
\small
\setlength{\tabcolsep}{10pt}
\caption{Rank comparison: LimiX-2M vs. 2M SNF baseline.}
\label{tab:rank_comp}
\begin{tabular}{lcc}
\toprule
\textbf{Metric} & \textbf{2M Baseline} & \textbf{LimiX-2M} \\
\midrule
Numerical Rank & 58.41 & \textbf{78.62} \scriptsize\textcolor{red}{(+34.60\%)} \\
Rank@99\%      & 13.94 & \textbf{25.35} \scriptsize\textcolor{red}{(+81.98\%)} \\
Rank@95\%      & ~6.73 & \textbf{12.31} \scriptsize\textcolor{red}{(+83.18\%)} \\
\bottomrule
\end{tabular}
\end{table}
\subsection{Toy Experiments of RBA}
\label{sec:toy_rba}
To empirically validate the capability of SNF in capturing latent feature dependencies, we conducted a toy experiment using a synthetic dataset generated from a Directed Acyclic Graph (DAG). \Cref{fig:toy_rba}(a) illustrates the DAG structure with the target $y$ shaded. The corresponding feature attention maps are presented in \Cref{fig:toy_rba}(b) and (c). These maps visualize the attention scores of the target $y$ with respect to other features, constructed by vertically stacking the attention vectors of all samples. In contrast to FSN, which relies heavily on self-attention, SNF exhibits reduced self-attention and effectively allocates attention to distinct features. Crucially, SNF assigns significantly higher attention scores to the direct causes of $y$ (e.g., Node 0), thereby confirming its superior performance in modeling feature interactions.

\subsection{Comparison With SOTA Models}
\begin{table*}[htbp]
\centering
\small
\caption{Aggregated average rank results. The benchmarks are categorized into \textbf{Classification} (BCCO-CLS, OpenML-cc18, PFN-CLS, TALENT-CLS, TabArena-CLS, TabZilla) and \textbf{Regression} (BCCO-REG, CTR23, PFN-REG, TALENT-REG, TabArena-REG). The reported values represent the average rank across AUC, Acc., and F1 for classification, and across $R^2$ and RMSE for regression. The \textcolor{red}{best} and \textcolor{blue}{second-best} results are highlighted in red and blue, respectively. Missing entries (`-') indicate the model is incompatible with the specific task.}
\label{tab:agg_results}
\begin{tabular}{l|ccccccccccc}
\toprule
\textbf{Model} & \rotatebox{90}{BCCO-CLS} & \rotatebox{90}{BCCO-REG} & \rotatebox{90}{CTR23} & \rotatebox{90}{OpenML-cc18} & \rotatebox{90}{PFN-CLS} & \rotatebox{90}{PFN-REG} & \rotatebox{90}{TALENT-CLS} & \rotatebox{90}{TALENT-REG} & \rotatebox{90}{TabArena-CLS} & \rotatebox{90}{TabArena-REG} & \rotatebox{90}{TabZilla} \\
\midrule
\multicolumn{12}{l}{\textit{\textbf{Tree-based Method}}} \\
\midrule
CatBoost & 13.03& 12.56& 12.39& 14.27& 11.56& 12.48& 13.31& 11.50& 9.44& 7.31& 14.51 \\
LightGBM & 11.88& 10.05& 10.89& 11.16& 9.87& 11.07& 11.34& 9.27& 9.06& 8.00& 11.53 \\
XGBoost & 11.62& 9.76& 9.85& 11.51& 8.94& 10.48& 11.01& 8.41& 9.26& 7.46& 11.43 \\
\midrule
\multicolumn{12}{l}{\textit{\textbf{Deep-Learning Method}}} \\
\midrule
AutoInt & 21.68& 19.14& 18.95& 19.87& 23.53& 20.13& 22.88& 20.18& 20.31& 19.15& 18.38 \\
DANets & 21.80& 30.49& 30.00& 19.89& 21.24& 28.83& 20.94& 29.44& 22.43& 30.46& 22.48 \\
DCN-v2 & 18.55& 15.84& 15.82& 19.94& 24.74& 16.35& 19.86& 16.70& 20.28& 16.73& 18.93 \\
DNNR & -& 22.87& 23.91& -& -& 24.30& -& 24.55& -& 25.61& - \\
ET & 15.96& 12.15& 11.97& 17.07& 14.66& 12.59& 17.08& 10.96& 13.20& 11.35& 17.02 \\
ExcelFormer & 16.71& 13.67& 13.89& 14.21& 14.62& 15.43& 15.49& 14.71& 16.15& 15.38& 14.27 \\
FT-Transformer & 15.63& 15.51& 15.95& 17.74& 19.66& 15.69& 17.75& 16.93& 20.86& 23.08& 17.49 \\
GrowNet & 29.11& 28.87& 28.65& 27.94& 28.90& 27.54& 28.48& 28.14& 26.85& 28.39& 27.00 \\
MLP & 20.23& 19.05& 17.12& 16.01& 17.83& 16.00& 18.30& 18.79& 19.21& 19.61& 18.47 \\
MLP-PLR & 17.40& 17.80& 16.92& 17.23& 21.29& 16.11& 19.67& 17.20& 19.82& 18.54& 17.01 \\
ModernNCA & 17.09& 17.25& 16.94& 17.47& 18.07& 16.31& 18.61& 17.74& 19.77& 18.46& 16.12 \\
NODE & 25.89& 22.06& 21.62& 26.64& 26.03& 21.65& 25.69& 21.62& 22.70& 15.69& 26.27 \\
RF & 13.45& 13.45& 12.80& 14.89& 12.26& 14.93& 14.85& 11.60& 10.43& 11.00& 13.33 \\
RealMLP & 13.33& 5.92& 6.32& 9.36& 12.33& 7.46& 10.17& 7.12& 20.16& 8.00& 10.33 \\
ResNet & 18.43& 17.08& 15.91& 14.72& 16.33& 16.50& 16.21& 17.86& 20.12& 19.77& 17.32 \\
SAINT & 19.32& 18.55& 17.89& 25.98& 23.38& 17.72& 21.54& 19.45& 24.45& 20.25& 20.75 \\
SNN & 22.31& 27.22& 26.77& 22.49& 24.56& 26.20& 24.05& 26.52& 22.78& 26.69& 21.62 \\
SwitchTab & 23.41& 30.72& 31.09& 23.37& 21.07& 29.28& 23.53& 29.96& 24.13& 30.92& 24.79 \\
T2G-Former & 15.80& 14.91& 15.05& 16.49& 18.53& 14.98& 17.50& 15.35& 18.85& 20.54& 16.05 \\
TANGOS & 18.25& 17.74& 18.14& 14.91& 17.21& 17.00& 17.60& 18.37& 18.13& 16.54& 16.16 \\
TabCaps & 24.51& -& -& 22.34& 21.39& -& 21.53& -& 20.67& -& 23.53 \\
TabM & 12.21& 5.40& 6.53& 10.75& 10.56& 7.61& 10.57& 6.44& 14.80& 6.69& 10.15 \\
TabNet & 27.46& 22.69& 21.85& 26.40& 26.06& 22.93& 25.74& 22.84& 24.47& 20.77& 28.05 \\
TabR & 16.34& 16.62& 16.14& 15.83& 20.31& 16.07& 18.08& 16.45& 19.26& 15.23& 15.80 \\
TabTransformer & 22.17& 30.65& 30.03& 24.05& 23.52& 30.35& 23.46& 29.60& 21.08& 30.33& 22.51 \\
\midrule
\multicolumn{12}{l}{\textit{\textbf{Foundation Model}}} \\
\midrule
LimiX-16M \textcolor{blue}{(16.52M)} & \textcolor{red}{2.71}& \textcolor{red}{4.28}& \textcolor{red}{4.61}& \textcolor{red}{4.99}& \textcolor{red}{2.87}& \textcolor{red}{5.65}& \textcolor{red}{4.02}& \textcolor{red}{3.58}& \textcolor{red}{4.78}& \textcolor{red}{3.46}& \textcolor{red}{5.63} \\
Mitra \textcolor{blue}{(75.67M)} & 12.71& 17.66& 18.00& 15.24& 11.18& 14.37& 13.15& 16.90& 12.95& 17.69& 16.03 \\
TabICL \textcolor{blue}{(27.10M)} & 9.42& -& -& 10.04& 8.79& -& 7.77& -& 8.27& -& 10.57 \\
TabPFN-v2 \textcolor{blue}{(7.24M)} & 10.58& 6.39& 8.44& 8.56& 7.03& 8.50& 7.61& 7.02& 7.11& 5.92& 9.57 \\
LimiX-2M \textcolor{blue}{(1.92M)} & \textcolor{blue}{7.31}& 6.52& 7.65& \textcolor{blue}{5.83}& \textcolor{blue}{5.71}& \textcolor{blue}{5.89}& \textcolor{blue}{6.53}& 7.02& \textcolor{blue}{5.10}& \textcolor{blue}{4.54}& \textcolor{blue}{6.58} \\
\midrule
\multicolumn{12}{l}{\textit{\textbf{Ensemble Method}}} \\
\midrule
AutoGluon & 9.39& \textcolor{blue}{5.13}& \textcolor{blue}{5.89}& 7.79& 7.47& 7.59& 7.98& \textcolor{blue}{5.78}& 7.02& \textcolor{red}{3.46}& 8.62 \\
\bottomrule
\end{tabular}
\end{table*}
\textbf{Training Setting.}
We construct our pre-training corpus using hierarchical Structural Causal Models (SCMs), following the data generation protocols established in the PFN series \citep{hollmann2022tabpfn, hollmann2025nature} and LimiX \citep{zhang2025limix}. 
In each training episode, we first sample a random Directed Acyclic Graph (DAG) and functional mechanisms to define a specific joint distribution, from which synthetic data samples are subsequently drawn. 
The backbone of LimiX-2M is a 12-block Transformer architecture designed to capture both inter-sample and intra-feature dependencies. 
Each block is distinctively composed of three components in sequence: a \textbf{Sample-Attention} module, a \textbf{Feed-Forward Network (FFN)}, and a \textbf{Feature-Attention} module. 
The Sample-Attention mechanism facilitates interaction across different data samples (rows), while the Feature-Attention mechanism models the relationships between variables (columns) within a sample. 
The model is configured with a hidden embedding dimension of $d_{\text{model}} = 96$ and employs $H=6$ attention heads in each attention module, balancing computational efficiency with expressive power.

\subsubsection{Classification}
\textbf{Benchmarks.}
We draw from six benchmark suites: TALENT-CLS~\citep{ye2025closer}, OpenML-CC18~\citep{bischl2017cc18}, PFN-CLS~\citep{hollmann2025nature}, TabZilla~\citep{ecelfresh2023tabzilla}, TabArena-CLS~\citep{erickson2025tabarena}, and BCCO-CLS~\citep{zhang2025limix}. 
Following a common protocol, datasets with more than 50 000 training samples, over 10 000 features, or more than 10 target classes were excluded. After filtering, the final collection comprised 179 datasets from TALENT-CLS, 62 from OpenML-CC18, 29 from PFN-CLS, 27 from TabZilla, 33 from TabArena-CLS, and 106 from BCCO-CLS.
For multi-class AUC and F1 calculations, we adopted a one-vs-one strategy.

\textbf{Metrics.}
We evaluated performance using three metrics—AUC (Area Under the ROC Curve), Acc. (Accuracy), and F1 (F1-score).

\subsubsection{Regression}
\textbf{Benchmarks.}
We use five open-source benchmarks, TALENT-REG~\citep{ye2025closer}, PFN-REG~\citep{hollmann2025nature}, TabArena-REG~\citep{erickson2025tabarena}, CTR23~\citep{fischer2023openml_ctr23}, and BCCO-REG~\citep{zhang2025limix}. Applying the same filtering criteria as for the classification benchmarks yields 33 datasets from CTR23, 28 from PFN-REG, 13 from TabArena-REG, 99 from TALENT-REG, and 50 from BCCO-REG.

\textbf{Metrics.}
We assessed regression performance using two metrics: R\textsuperscript{2} (coefficient of determination) and RMSE (root mean squared error).

Please see Appendix ~\ref{app:exp} for more details.

\subsubsection{Results}
We report the aggregated rank results in \Cref{tab:agg_results}. To strike a favorable balance between inference efficiency and predictive quality, we trained a compact variant, LimiX-2M (1.92M). Despite its lightweight architecture, LimiX-2M demonstrates remarkable generalization capabilities, consistently achieving the second-best performance (highlighted in blue) across diverse classification and regression benchmarks, trailing only LimiX (16.52M). Notably, LimiX-2M significantly outperforms substantially larger foundation models, such as Mitra (75M) and TabICL (27M), as well as the specialized TabPFN-v2 (7M). Furthermore, it surpasses strong traditional ensembles like AutoGluon and XGBoost. These results compellingly validate that the proposed RaBEL framework and Reordered Bidirectional Attention mechanism enable high parameter efficiency without compromising representation power.
The detailed performance metrics for each individual dataset are provided in \Cref{appendix:benchmark_res}.

\subsection{Ablation Study}
\subsubsection{Ablation on Modules and RBA}
We conduct ablation studies on two classification benchmarks TabZilla and TabArena to evaluate the Reordered Bidirectional Attention (RBA) and the SNF module design. First, results in \Cref{fig:ablation module} validate the effectiveness of the RBA mechanism. Second, regarding the SNF architecture, we compare various sequences of attention and Feed-Forward Networks (FFN). Empirical results favor a sample-attention-first approach; specifically, interleaving the FFN between attention layers (SFN) optimally balances performance and parameter efficiency.

\begin{figure}[htbp]
    \centering
    \includegraphics[width=1\linewidth]{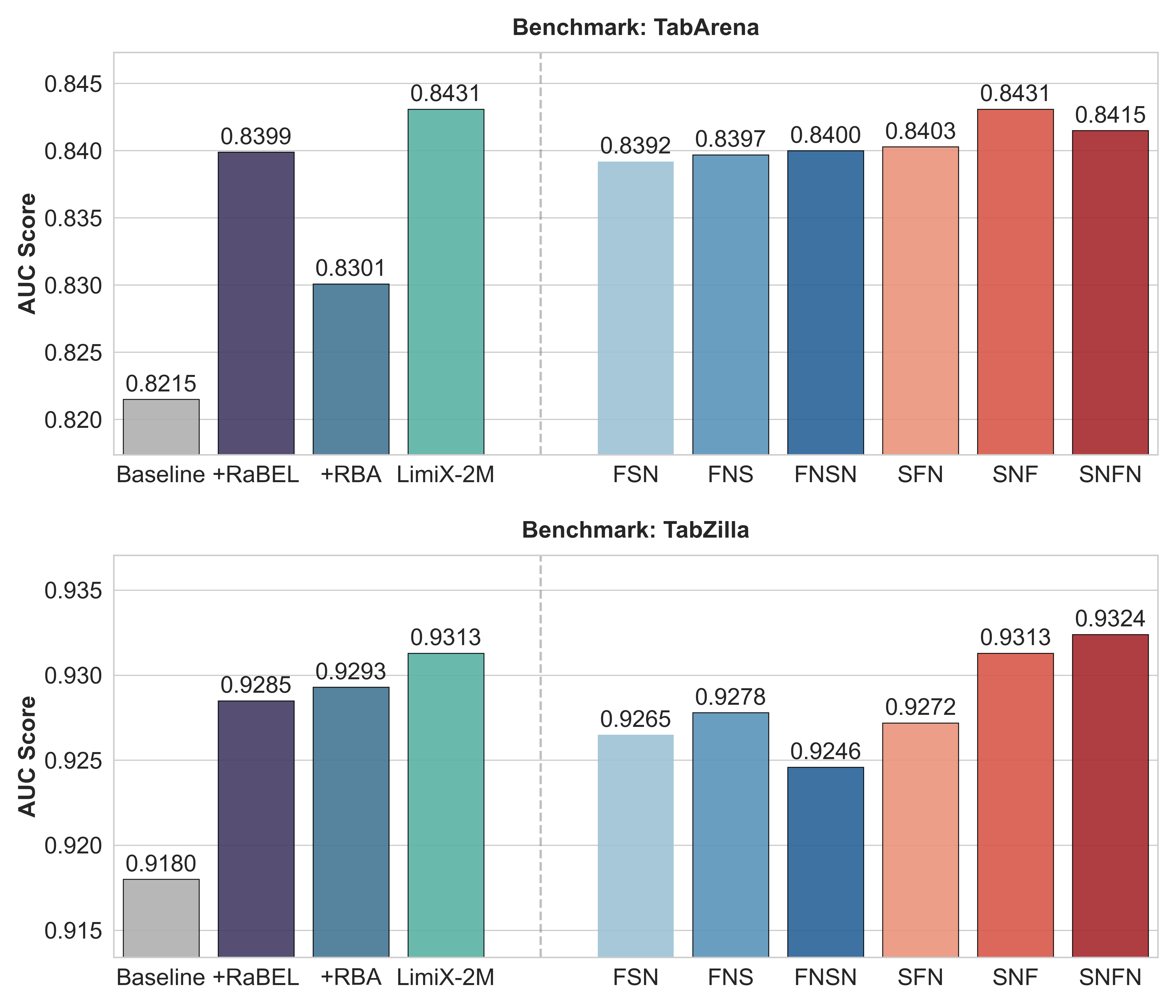}
    \caption{Comprehensive ablation studies on TabArena (top) and TabZilla (bottom) benchmarks. The visualization is divided into two distinct analyses: the Module Ablation (left) demonstrates the incremental performance gains (AUC) attributed to the integration of RaBEL and RBA modules into the baseline; the Structural Ablation (right) evaluates the impact of different topological orderings of components (Feature interaction, Structure, and Normalization) within the RBA block.}
    \label{fig:ablation module}
\end{figure}

\subsubsection{Ablation on RaBEL}
We perform an ablation study over the hyperparameters of RaBEL, sweeping settings such as the token embedding dimension, number of kernels, $\sigma$, initialization scheme, and kernel form. The results are reported in the \Cref{appendix:ablation_rabel}.
\section{Conclusion}
We presented \textbf{RaBEL}, a radial-basis embedding layer that replaces linear numeric encoders and resolves the early-layer low-rank bottleneck by providing localized, scale-aware representations. We also revisited bidirectional attention and demonstrated that it improves conditioning and guarantees that every attention computation contributes to prediction. The resulting model, \textbf{LimiX-2M}, achieves superior accuracy with a substantially smaller parameter budget than prior tabular foundation models, while lowering compute. Future work will explore hybrid basis libraries, stronger self-supervised pretraining, and scaling LimiX-2M to broader domains and distribution shifts.
\section*{Acknowledgments}
This work is supported in part by the SSTIC Grant (KJZD20230923115106012, KJZD20230923114916032, GJHZ20240218113604008) and Tsinghua
University-Siemens Joint Research Center (JCIIOT).

\section*{Impact Statement}
This paper contributes to the advancement of machine learning. While our work may have broader societal implications, we do not identify any specific consequences that warrant additional discussion beyond those commonly associated with progress in machine learning research.

\bibliography{example_paper}
\bibliographystyle{icml2026}

\newpage
\appendix
\onecolumn
\appendix


\section{Proof of theoretical analyses}
\label{appendix:proof}

\paragraph{Proof of \Cref{prop:rank}.} The embedding modules consist of two $p$-dimensional parameter vector $\alpha, \beta\in \mathbb{R}^{p\times 1}$. Thus $x_i$ is embedded as $z_i^T=x_i\alpha^T+\beta^T$, and the embedded input matrix $z^T=[z_1, z_2, ..., z_n]^T\in\mathbb{R}^{n\times p}$. The rank of the matrix is obviously at most 2, since all its row vectors can be written as linear combinations of $\alpha$ and $\beta$.

The parameter matrix of self-attention are: $W_Q, W_K, W_V\in\mathbb{R}^{p\times p}$.
Thus the attention score of query $x_i$ to $x_j$ is:
\begin{equation}
\begin{aligned}
a_{ij}&=(x_i\alpha^T+\beta^T)W_Q((x_j\alpha^T+\beta^T)W_K)^T\\
&=x_ix_j\alpha^TW_QW_K^T\alpha+x_i\alpha^TW_QW_K^T\beta+x_j\beta^TW_QW_K^T\alpha+\beta^TW_QW_K^T\beta\\
&=\lambda_1 x_ix_j+\lambda_2x_i+\lambda_3x_j+\lambda_4
\end{aligned}    
\end{equation}
Where $\lambda_1=\alpha^TW_QW_K^T\alpha,\ \lambda_2=\alpha^TW_QW_K^T\beta,\ \lambda_3=\beta^TW_QW_K^T\alpha,\ \lambda_4=\beta^TW_QW_K^T\beta$. Thus we have:
\begin{equation}
\begin{aligned}
\hat{z}_i^T&=\sum_{j=1}^n \frac{\exp (a_{ij}/\sqrt{p})}{\sum_{j=1}^n \exp(a_{ij}/\sqrt{p})}\cdot (x_j\alpha^T +\beta^T)W_V\\
&=\left(\sum_{j=1}^n \frac{x_j\exp (a_{ij}/\sqrt{p})}{\sum_{j=1}^n \exp(a_{ij}/\sqrt{p})}\right)\alpha_V^T+\beta_V^T
\end{aligned}
\end{equation}
Where $\alpha_V^T=\alpha^TW_V$ and $\beta_V^T=\beta^TW_V$. We can see that $z_i$ can also be written as the linear combination of two vectors $\alpha_V$ and $\beta_V$, thus the rank of the output is at most 2. 
For multi-head attention, the calculation of each head is exactly the same as standard self-attention. 
Use $W_O^{(h)}$ to denote the output projection matrix of head $h$. The output projection is:
\begin{equation}
\begin{aligned}
\hat{z}_i=\sum_{h=1}^H\hat{z}_{i,h}^TW_O^{(h)}&=\sum_{h=1}^H(\mu_{i,h}\alpha_{V}^{(h)}+\beta_V^{(h)})^TW_O^{(h)}\\
&=\sum_{h=1}^H(\mu_{i,h}\alpha_{O}^{(h)}+\beta_O^{(h)})^T\\
&=\sum_{h=1}^H(\mu_{i,h}\alpha_{O}^{(h)})^T+\beta_O^T
\end{aligned}
\end{equation}
Thus $\hat{z}_i$ can be written as the linear combinations of $H+1$ vectors, indicating that the rank of the output is at most $H+1$.

\section{Value-Sensitivity Collapse Under Standard Scalar Embeddings}
\label{sec:valuesensitivity}

This section explains a structural inefficiency in common TMFs: \emph{even with feature-identity / positional signals, standard scalar tokenization injects each value $x_{i,j}$ through a severely low-dimensional channel}. The consequence is weak early-layer value sensitivity and high redundancy, which makes scaling width $d$ inefficient. We formalize this bottleneck and derive mechanism-driven diagnostics that our proposed RaBEL tokenizer and reordered bidirectional attention are designed to fix.

\subsection{Problem Setup and Notation}
\label{subsec:setup_notation_s3}

Let $X \in \mathbb{R}^{N \times D}$ denote a tabular dataset with $N$ samples (rows) and $D$ features (columns). We write $x_{i,j}$ for the scalar value at sample $i$ and feature $j$. Our goal is to map each cell $x_{i,j}$ to an embedding $e_{i,j} \in \mathbb{R}^{d}$ that is fed into a transformer with bidirectional attention operating along the sample and feature axes. We denote by $E \in \mathbb{R}^{N \times D \times d}$ the full tensor of cell embeddings, and by $L$ the number of attention blocks in the backbone.

\paragraph{Feature-only signals (IDs/positions).}
Many TFMs add a learned vector that depends only on the feature index $j$ (feature ID, positional signal, etc.). We introduce an auxiliary feature-only vector $a_j\in\mathbb{R}^d$ (set $a_j\equiv 0$ if absent). Crucially, $a_j$ carries \emph{no information about the value} $x_{i,j}$.

\subsection{Baseline: Affine Scalar Tokenization Implies a One-Dimensional Value Channel}
\label{subsec:affine_value_channel}

We analyze a widely used baseline for numerical cells: \textbf{direct linear projection} (often followed by LayerNorm), optionally plus feature-only signals. Concretely, define
\begin{equation}
\label{eq:lin_tokenizer}
e^{\mathrm{lin}}_{i,j}
\;=\;
\mathrm{LN}\!\big(W_{\mathrm{lin}}\,x_{i,j} + b_{\mathrm{lin}}\big) \;+\; a_j,
\qquad
W_{\mathrm{lin}}\in\mathbb{R}^{d\times 1},\;\; b_{\mathrm{lin}}\in\mathbb{R}^{d}.
\end{equation}
Let $E^{\mathrm{lin}}\in\mathbb{R}^{N\times D\times d}$ denote the tensor whose $(i,j)$ entry is $e^{\mathrm{lin}}_{i,j}$.

The key point is not that IDs ``do nothing''—they separate columns—but that they cannot increase the intrinsic dimension through which \emph{values} enter the model. The right way to see this is to remove feature-only shifts and isolate within-feature value variation.

\paragraph{Within-feature centering.}
For any embedding tensor $E\in\mathbb{R}^{N\times D\times d}$, define the feature-$j$ slice $E_{:,j,:}\in\mathbb{R}^{N\times d}$ by stacking embeddings over samples. Define the feature-wise mean and centered embeddings:
\begin{equation}
\label{eq:feature_centering}
\bar e_{j} \;=\; \frac{1}{N}\sum_{i=1}^{N} e_{i,j}\in\mathbb{R}^{d},
\qquad
\tilde e_{i,j} \;=\; e_{i,j}-\bar e_{j},
\qquad
\tilde E_{j}\in\mathbb{R}^{N\times d}:\; (\tilde E_j)_{i,:}=\tilde e_{i,j}^\top.
\end{equation}
By construction, any feature-only vector $a_j$ cancels in $\tilde e_{i,j}$.

\paragraph{Effective rank.}
For a matrix $A$ with singular values $\{s_k\}$, define the effective rank
\begin{equation}
\label{eq:effective_rank_s3}
r_{\mathrm{eff}}(A)
\;=\;
\exp\!\Big(-\sum_k p_k \log p_k\Big),
\qquad
p_k=\frac{s_k}{\sum_\ell s_\ell}.
\end{equation}
We use $r_{\mathrm{eff}}(\tilde E_j)$ to quantify the dimensionality of \emph{value-induced} variation within feature $j$.



\begin{theorem}[IDs do not increase within-feature value dimension]
\label{thm:value_rank1}
Consider the baseline tokenizer
\begin{equation}
\label{eq:lin_tokenizer_restate}
e^{\mathrm{lin}}_{i,j}
\;=\;
\mathrm{LN}\!\big(W_{\mathrm{lin}}\,x_{i,j} + b_{\mathrm{lin}}\big) \;+\; a_j,
\qquad
W_{\mathrm{lin}}\in\mathbb{R}^{d\times 1},\;\; b_{\mathrm{lin}}\in\mathbb{R}^{d},\;\; a_j\in\mathbb{R}^d,
\end{equation}
and define the pre-LayerNorm vector
\begin{equation}
\label{eq:preLN_lin_restate}
z^{\mathrm{lin}}_{i,j} \;=\; W_{\mathrm{lin}}\,x_{i,j} + b_{\mathrm{lin}}\in\mathbb{R}^{d}.
\end{equation}
For any feature $j$, let $\tilde Z^{\mathrm{lin}}_{j}\in\mathbb{R}^{N\times d}$ be obtained by centering $\{z^{\mathrm{lin}}_{i,j}\}_{i=1}^N$ across samples:
\begin{equation}
\label{eq:centeredZ_def_restate}
(\tilde Z^{\mathrm{lin}}_{j})_{i,:}
\;=\;
\Big(z^{\mathrm{lin}}_{i,j}-\frac{1}{N}\sum_{i'=1}^N z^{\mathrm{lin}}_{i',j}\Big)^\top.
\end{equation}
Then $\mathrm{rank}(\tilde Z^{\mathrm{lin}}_{j})\le 1$. Equivalently, for any $i,i'$,
\begin{equation}
\label{eq:diff_spanW_restate}
z^{\mathrm{lin}}_{i,j}-z^{\mathrm{lin}}_{i',j}
\;=\;
W_{\mathrm{lin}}(x_{i,j}-x_{i',j})
\;\in\;
\mathrm{span}(W_{\mathrm{lin}}),
\end{equation}
and any additive feature-only vector $a_j$ cancels exactly under within-feature centering over $i$.
\end{theorem}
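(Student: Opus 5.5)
The argument is a direct computation on the pre-LayerNorm vectors, so I would begin with the difference identity \eqref{eq:diff_spanW_restate} and derive the rank bound from it. Fix a feature $j$. For any two samples $i,i'$, subtract the two instances of \eqref{eq:preLN_lin_restate}. The shared bias $b_{\mathrm{lin}}$ cancels, which gives $z^{\mathrm{lin}}_{i,j}-z^{\mathrm{lin}}_{i',j}=W_{\mathrm{lin}}(x_{i,j}-x_{i',j})$. Since $x_{i,j}-x_{i',j}$ is a scalar, this difference lies in $\mathrm{span}(W_{\mathrm{lin}})$.

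Next I would write the centered rows explicitly. Let $\bar x_j=\frac1N\sum_{i'}x_{i',j}$. Averaging \eqref{eq:preLN_lin_restate} over $i'$ gives the feature mean $W_{\mathrm{lin}}\bar x_j+b_{\mathrm{lin}}$. Hence the $i$-th row of $\tilde Z^{\mathrm{lin}}_j$ in \eqref{eq:centeredZ_def_restate} is $(x_{i,j}-\bar x_j)\,W_{\mathrm{lin}}^\top$. This yields the factorization
\begin{equation*}
\tilde Z^{\mathrm{lin}}_j \;=\; \tilde x_j\,W_{\mathrm{lin}}^\top,\qquad \tilde x_j=(x_{1,j}-\bar x_j,\ldots,x_{N,j}-\bar x_j)^\top\in\mathbb{R}^{N},
\end{equation*}
an outer product of an $N\times1$ and a $1\times d$ matrix. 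Its rank is therefore at most $1$, with equality exactly when $W_{\mathrm{lin}}\neq0$ and the column $j$ is not constant. The stated equivalence holds because every row of a centered matrix equals an average of pairwise row differences, so the span of the rows coincides with the span of the differences.

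Last, the claim about $a_j$. The vector $a_j$ is independent of $i$, so it contributes $a_j$ to every $e^{\mathrm{lin}}_{i,j}$ and also $a_j$ to the feature mean $\bar e_j$ in \eqref{eq:feature_centering}. It therefore cancels exactly in $\tilde e_{i,j}$, and the same holds at the pre-LN level if $a_j$ is added there.

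I do not expect any step to be a real obstacle. The one point that needs care is to keep the claim at the pre-LN level, as the statement does. After LayerNorm the centered post-LN rows $\mathrm{LN}(W_{\mathrm{lin}}x+b_{\mathrm{lin}})-\text{mean}$ can in general have rank up to $2$ or $3$, because LayerNorm is a nonlinear map of the scalar $x$ onto a curve in $\mathrm{span}\{W_{\mathrm{lin}}-\overline{W}\mathbf 1,\,b_{\mathrm{lin}}-\overline{b}\mathbf 1\}$, plus possible affine LN parameters. I would state that remark separately to make clear that the rank-one bound concerns the value channel entering LN, and that it is not a claim about $\tilde E_j$ after normalization.
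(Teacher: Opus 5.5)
Your proposal is correct and follows the paper's proof essentially step for step: compute the feature mean $W_{\mathrm{lin}}\bar x_j+b_{\mathrm{lin}}$, factor $\tilde Z^{\mathrm{lin}}_j$ as the outer product of the centered column with $W_{\mathrm{lin}}^\top$, obtain the difference identity by direct subtraction, and cancel $a_j$ because it shifts every token and the feature mean by the same vector. One small correction to your optional closing remark: under centering the LayerNorm bias cancels and the gain only rescales coordinates, so the centered post-LN rows lie in $\mathrm{span}\{\gamma\odot u,\gamma\odot v\}$, where $u,v$ are $W_{\mathrm{lin}},b_{\mathrm{lin}}$ with their coordinate means removed (cf.\ Step~1 in the proof of Proposition~\ref{prop:jacobian_bottleneck_s3}), and hence their rank is at most $2$, never $3$.
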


\begin{proof}
Fix a feature index $j\in[D]$. We prove three claims:
(i) the explicit centered form of $z^{\mathrm{lin}}_{i,j}$,
(ii) the rank bound $\mathrm{rank}(\tilde Z^{\mathrm{lin}}_{j})\le 1$,
and (iii) the cancellation of feature-only shifts $a_j$ under within-feature centering.

By definition \eqref{eq:preLN_lin_restate},
\[
\frac{1}{N}\sum_{i'=1}^N z^{\mathrm{lin}}_{i',j}
\;=\;
\frac{1}{N}\sum_{i'=1}^N \big(W_{\mathrm{lin}}x_{i',j}+b_{\mathrm{lin}}\big)
\;=\;
W_{\mathrm{lin}}\Big(\frac{1}{N}\sum_{i'=1}^N x_{i',j}\Big)+b_{\mathrm{lin}}.
\]
Define the empirical mean of feature $j$ (over samples) as
\begin{equation}
\label{eq:barx_def}
\bar x_j \;=\; \frac{1}{N}\sum_{i=1}^N x_{i,j}.
\end{equation}
Then the within-feature mean of $\{z^{\mathrm{lin}}_{i,j}\}$ is
\begin{equation}
\label{eq:barz_def}
\bar z_j \;\triangleq\; \frac{1}{N}\sum_{i=1}^N z^{\mathrm{lin}}_{i,j}
\;=\;
W_{\mathrm{lin}}\bar x_j+b_{\mathrm{lin}}.
\end{equation}

Subtracting \eqref{eq:barz_def} from \eqref{eq:preLN_lin_restate} yields, for each $i\in[N]$,
\begin{equation}
\label{eq:centeredz_explicit}
z^{\mathrm{lin}}_{i,j}-\bar z_j
\;=\;
\big(W_{\mathrm{lin}}x_{i,j}+b_{\mathrm{lin}}\big) - \big(W_{\mathrm{lin}}\bar x_j+b_{\mathrm{lin}}\big)
\;=\;
W_{\mathrm{lin}}(x_{i,j}-\bar x_j).
\end{equation}
Thus every centered vector $z^{\mathrm{lin}}_{i,j}-\bar z_j$ lies in the one-dimensional subspace $\mathrm{span}(W_{\mathrm{lin}})\subseteq\mathbb{R}^d$.

Let $w\triangleq W_{\mathrm{lin}}\in\mathbb{R}^{d\times 1}$, and define the centered scalar vector
\begin{equation}
\label{eq:v_def}
v_j \;\triangleq\; 
\big[x_{1,j}-\bar x_j,\; x_{2,j}-\bar x_j,\;\dots,\; x_{N,j}-\bar x_j\big]^\top\in\mathbb{R}^{N}.
\end{equation}
Equation \eqref{eq:centeredz_explicit} implies that the $i$-th row of $\tilde Z^{\mathrm{lin}}_{j}$ equals
\[
(\tilde Z^{\mathrm{lin}}_{j})_{i,:}
\;=\;
\big(z^{\mathrm{lin}}_{i,j}-\bar z_j\big)^\top
\;=\;
\big(w(x_{i,j}-\bar x_j)\big)^\top
\;=\;
(x_{i,j}-\bar x_j)\,w^\top.
\]
Stacking over $i=1,\dots,N$ gives the exact matrix factorization
\begin{equation}
\label{eq:outer_product}
\tilde Z^{\mathrm{lin}}_{j}
\;=\;
v_j\,w^\top
\;\in\;
\mathbb{R}^{N\times d}.
\end{equation}
An outer product of two vectors has rank at most $1$ (and rank $0$ if either vector is zero). Therefore,
\[
\mathrm{rank}(\tilde Z^{\mathrm{lin}}_{j})
\;=\;
\mathrm{rank}(v_j w^\top)
\;\le\; 1,
\]
which proves the rank statement.

For any $i,i'$,
\[
z^{\mathrm{lin}}_{i,j}-z^{\mathrm{lin}}_{i',j}
=
\big(w x_{i,j}+b_{\mathrm{lin}}\big)-\big(w x_{i',j}+b_{\mathrm{lin}}\big)
=
w(x_{i,j}-x_{i',j})
\in \mathrm{span}(w),
\]
which is exactly \eqref{eq:diff_spanW_restate}.

Consider any embedding of the form $e_{i,j}=g_{i,j}+a_j$ where $a_j$ depends only on $j$ (not on $i$) and $g_{i,j}\in\mathbb{R}^d$ can be arbitrary (e.g., $g_{i,j}=\mathrm{LN}(z^{\mathrm{lin}}_{i,j})$ in \eqref{eq:lin_tokenizer_restate}). Let
\[
\bar e_j=\frac{1}{N}\sum_{i=1}^N e_{i,j}
=
\frac{1}{N}\sum_{i=1}^N (g_{i,j}+a_j)
=
\Big(\frac{1}{N}\sum_{i=1}^N g_{i,j}\Big)+a_j.
\]
Then the centered embedding satisfies
\[
e_{i,j}-\bar e_j
=
(g_{i,j}+a_j)-\Big(\frac{1}{N}\sum_{i'=1}^N g_{i',j}+a_j\Big)
=
g_{i,j}-\frac{1}{N}\sum_{i'=1}^N g_{i',j},
\]
so $a_j$ cancels exactly. Applying this to \eqref{eq:lin_tokenizer_restate} shows that any feature ID / positional vector $a_j$ does not contribute to within-feature variation over $i$.
\end{proof}

\paragraph{Interpretation (the ``value bottleneck'').}
Theorem~\ref{thm:value_rank1} states that, before normalization, \emph{each feature injects value variation through a one-dimensional channel}, independent of width $d$.
IDs/positional signals can separate features (by shifting means across $j$), but they do not increase within-feature value degrees of freedom because they carry no dependence on $x_{i,j}$ and vanish under \eqref{eq:feature_centering}.
This predicts low $r_{\mathrm{eff}}(\tilde E^{\mathrm{lin}}_j)$ and highly correlated early activations.

\paragraph{LayerNorm does not remove the bottleneck; it reshapes it locally.}
LayerNorm is nonlinear, but locally around a batch it admits a first-order approximation. Let $J_{\mathrm{LN}}(z)\in\mathbb{R}^{d\times d}$ denote the Jacobian of $\mathrm{LN}$ at $z$. For small perturbations of $x_{i,j}$,
\begin{equation}
\label{eq:LN_local_s3}
\delta\,\mathrm{LN}(z^{\mathrm{lin}}_{i,j})
\;\approx\;
J_{\mathrm{LN}}(z^{\mathrm{lin}}_{i,j})\,W_{\mathrm{lin}}\;\delta x_{i,j}.
\end{equation}
Thus, the \emph{local} value-induced tangent directions remain confined to the span of $J_{\mathrm{LN}}(\cdot)W_{\mathrm{lin}}$, which is typically low-dimensional relative to $d$.
In short: \textbf{increasing width can increase redundancy faster than it increases value sensitivity.}

\subsection{Consequences for Shallow Blocks: Low-Dimensional Value Sensitivity}
\label{subsec:jacobian_s3}

Let $H^{(\ell)}\in\mathbb{R}^{N\times D\times d}$ denote the hidden tensor after $\ell\in\{0,1,\dots,L\}$ attention blocks, with $H^{(0)}=E$.
To measure how many independent directions feature $j$ can influence the representation, we define a value-sensitivity Jacobian.

\paragraph{Value-sensitivity Jacobian.}
For a fixed feature $j$, define
\begin{equation}
\label{eq:value_jacobian_s3}
\mathcal{J}^{(\ell)}_{j}
\;=\;
\Big[
\frac{\partial \mathrm{vec}(H^{(\ell)})}{\partial x_{1,j}},
\;
\frac{\partial \mathrm{vec}(H^{(\ell)})}{\partial x_{2,j}},
\;\dots,\;
\frac{\partial \mathrm{vec}(H^{(\ell)})}{\partial x_{N,j}}
\Big]
\;\in\;
\mathbb{R}^{(NDd)\times N},
\end{equation}
and use $r_{\mathrm{eff}}(\mathcal{J}^{(\ell)}_{j})$ as a diagnostic of \emph{value sensitivity}.

\begin{proposition}[Affine tokenization bottlenecks local value sensitivity (local linearization)]
\label{prop:jacobian_bottleneck_s3}
Consider one numerical feature \(j\), and let the scalar tokenizer be
\[
e_{i,j}=\operatorname{LN}(w x_{i,j}+b)+a_j,
\]
where \(w,b,a_j\in\mathbb R^d\), and LayerNorm uses fixed gain and bias parameters.

Now consider a first-layer \emph{sample-attention} block with \(H\) attention heads applied to the sequence
\[
\{e_{1,j},\dots,e_{N,j}\}.
\]
For any output position \(t\), define the Jacobian of the output token with respect to the values of feature \(j\) across the \(N\) samples:
\[
K_{t,j}
=
\left[
\frac{\partial h^{(1)}_{t,j,:}}{\partial x_{1,j}},
\ldots,
\frac{\partial h^{(1)}_{t,j,:}}{\partial x_{N,j}}
\right]
\in\mathbb R^{d\times N}.
\]

Then there exists a linear subspace \(U_j\subseteq \mathbb R^d\), independent of \(t\) and of the specific perturbation direction, such that
\[
\dim(U_j)\le 2H,
\]
and
\[
\frac{\partial h^{(1)}_{t,j,:}}{\partial x_{k,j}}\in U_j,
\qquad
\forall\, t,k\in\{1,\dots,N\}.
\]
Hence,
\[
\operatorname{rank}(K_{t,j})\le 2H
\qquad\text{for every }t.
\]

If we stack the Jacobians for all output positions \(t=1,\dots,N\), we obtain the full first-layer value-sensitivity Jacobian
\[
J^{\mathrm{attn}}_j\in\mathbb R^{Nd\times N},
\]
which satisfies
\[
\operatorname{rank}(J^{\mathrm{attn}}_j)\le 2HN.
\]

Moreover, if this attention sublayer is followed by any module that is linearized at the operating point, such as a first-order approximation of an FFN or a feature-mixing map \(L\), then
\[
\operatorname{rank}(LJ^{\mathrm{attn}}_j)
\le
\operatorname{rank}(J^{\mathrm{attn}}_j)
\le 2HN.
\]

In particular, this upper bound is \emph{independent of the hidden width \(d\)}. Therefore, increasing width alone does not automatically increase the number of value-sensitive directions available in shallow layers.
\end{proposition}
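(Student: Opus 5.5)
The plan is to show that, for a fixed feature $j$, every first-layer token $e_{1,j},\dots,e_{N,j}$ lies in one two-dimensional affine subspace of $\mathbb{R}^d$. Once that holds, differentiating the attention output will only ever produce vectors in the images of a two-dimensional linear space under the per-head maps $W_O^{(h)}W_V^{(h)}$, and this gives the $2H$ bound directly.

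\textbf{Step 1 (the tokens form a 2-dimensional affine family).} Write $\mathrm{LN}(z)=g\odot (z-\bar z\mathbf{1})/\mathrm{std}(z)+\beta_{\mathrm{LN}}$, where $\bar z$ is the mean of the entries of $z$. For $z=wx+b$ we have $z-\bar z\mathbf{1}=(w-\bar w\mathbf{1})x+(b-\bar b\mathbf{1})$, with $\bar w,\bar b$ the means of the entries of $w$ and $b$. Set $p=g\odot(w-\bar w\mathbf{1})$ and $q=g\odot(b-\bar b\mathbf{1})$, and let $c_j=\beta_{\mathrm{LN}}+a_j$. Then $e_{s,j}=c_j+\alpha_s p+\gamma_s q$ for scalars $\alpha_s,\gamma_s$ that depend smoothly on $x_{s,j}$. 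Put $S_j=\mathrm{span}\{p,q\}$, so $\dim S_j\le 2$. Two facts follow. First, $\partial e_{s,j}/\partial x_{k,j}=\delta_{sk}\,u_k$ with $u_k=J_{\mathrm{LN}}(z_k)w\in S_j$. Second, any difference $e_{s,j}-e_{s',j}$ lies in $S_j$. The argument parallels Theorem~\ref{thm:value_rank1}, except that we now keep the LayerNorm and check that its output still lies in an affine plane whose offset $c_j$ does not depend on $x$.

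\textbf{Step 2 (one head).} Head $h$ at position $t$ outputs $o^{(h)}_t=W_O^{(h)}\sum_s A^{(h)}_{ts}W_V^{(h)}e_{s,j}$. Differentiating with respect to $x_{k,j}$ gives two terms.
\begin{itemize}
\item The value path gives $A^{(h)}_{tk}W_O^{(h)}W_V^{(h)}u_k$.
\item The score path gives $W_O^{(h)}W_V^{(h)}\sum_s(\partial A^{(h)}_{ts}/\partial x_{k,j})\,e_{s,j}$.
\end{itemize}
The softmax weights sum to one, so $\sum_s \partial A^{(h)}_{ts}/\partial x_{k,j}=0$. Hence the constant $c_j$ drops out, and the weighted sum equals $\sum_s(\partial A^{(h)}_{ts}/\partial x_{k,j})(e_{s,j}-c_j)\in S_j$. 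Both terms therefore lie in $W_O^{(h)}W_V^{(h)}S_j$, which has dimension at most $2$.

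\textbf{Step 3 (all heads, stacking, and composition).} Define $U_j=\sum_{h=1}^H W_O^{(h)}W_V^{(h)}S_j$. It depends only on the parameters and on $j$, not on $t$ or $k$, and $\dim U_j\le 2H$. Every column of $K_{t,j}$ lies in $U_j$, so $\operatorname{rank}(K_{t,j})\le 2H$. The stacked Jacobian $J^{\mathrm{attn}}_j$ is $N$ blocks of $K_{t,j}$ placed vertically, so its column space lies in $U_j^{\oplus N}$, whose dimension is at most $2HN$. For the linearized follow-up module $L$, use $\operatorname{rank}(LJ)\le\operatorname{rank}(J)$. None of these bounds involves $d$.

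The main obstacle is the score path in Step 2. A priori it mixes in the full tokens $e_{s,j}$, including the offset $c_j$ carrying $a_j$ and the LayerNorm bias, and a careless count would add an extra dimension for each head. The zero-sum property of softmax derivatives is what removes it, and I would check that it holds both when $t=k$ (query dependence) and when $t\neq k$ (key dependence only).

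I would also state one convention explicitly. If the sublayer includes a residual connection, the identity contributes $u_k\in S_j$, which raises the bound to $2H+2$ unless the residual is excluded from the "attention block". I would therefore state the result for the attention map itself.
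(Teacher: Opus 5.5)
Your argument is correct and is essentially the paper's proof: the tokens lie in the same affine plane $c_j+\mathrm{span}\{p,q\}$, and you obtain the same subspace $U_j=\sum_{h}W_O^{(h)}W_V^{(h)}S_j$; your residual-connection caveat is a fair refinement that the paper leaves implicit. The only difference is the order of steps: the paper first notes that each head's output is a convex combination of value vectors, so it lies in the fixed affine set $c_{\mathrm{out}}+U_j$ for every input, and then differentiates once, which makes your separate score-path step (the zero sum of softmax derivatives, i.e.\ the differentiated form of $\sum_s A^{(h)}_{ts}=1$) automatic.
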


\paragraph{Proof.}

\textbf{Step 1: The tokenizer restricts value-dependent variation to at most a 2-dimensional affine subspace.}

Let
\[
z(x)=w x+b.
\]
Write
\[
\bar w=\frac{1}{d}\mathbf 1^\top w,
\qquad
\bar b=\frac{1}{d}\mathbf 1^\top b,
\]
and define the centered vectors
\[
u=w-\bar w \mathbf 1,
\qquad
v=b-\bar b \mathbf 1.
\]
Then the centered pre-normalized token is
\[
z(x)-\frac{1}{d}(\mathbf 1^\top z(x))\mathbf 1
=
x u+v.
\]

LayerNorm can therefore be written as
\[
\operatorname{LN}(z(x))
=
\beta+\frac{D_\gamma(xu+v)}{\sigma(x)},
\]
where \(D_\gamma=\operatorname{diag}(\gamma)\) and
\[
\sigma(x)=\sqrt{\frac{1}{d}\|xu+v\|_2^2+\varepsilon}.
\]
Define
\[
u'=D_\gamma u,
\qquad
v'=D_\gamma v.
\]
Then
\[
e_{i,j}
=
a_j+\beta+\alpha_i u'+\eta_i v',
\]
for some scalars \(\alpha_i,\eta_i\) depending on \(x_{i,j}\).

Therefore, for a fixed feature \(j\), all tokens \(\{e_{i,j}\}_{i=1}^N\) lie in the affine subspace
\[
c_j+\operatorname{span}\{u',v'\},
\qquad
c_j:=a_j+\beta.
\]
Hence the \emph{value-dependent part} of the tokenizer output is confined to a subspace of dimension at most \(2\).

This already shows that feature identity \(a_j\) does not enlarge the value-sensitive subspace: it only shifts the affine offset.

\textbf{Step 2: Each attention head preserves this low-dimensional structure.}

Consider one attention head \(h\). Let
\[
p_h=W_V^{(h)}u',
\qquad
q_h=W_V^{(h)}v',
\qquad
c_h=W_V^{(h)}c_j.
\]
Then the value vectors have the form
\[
V_i^{(h)}
=
W_V^{(h)}e_{i,j}
=
c_h+\alpha_i p_h+\eta_i q_h.
\]
Thus all value vectors for this head lie in the affine subspace
\[
c_h+\operatorname{span}\{p_h,q_h\}.
\]

The output of head \(h\) at position \(t\) is
\[
o_t^{(h)}
=
\sum_{s=1}^N \omega_{ts}^{(h)} V_s^{(h)},
\qquad
\sum_{s=1}^N \omega_{ts}^{(h)}=1.
\]
Substituting the expression for \(V_s^{(h)}\) gives
\[
o_t^{(h)}
=
c_h
+
\left(\sum_{s=1}^N \omega_{ts}^{(h)}\alpha_s\right)p_h
+
\left(\sum_{s=1}^N \omega_{ts}^{(h)}\eta_s\right)q_h.
\]
Therefore, for every \(t\), the output of head \(h\) still lies in the same affine subspace
\[
c_h+\operatorname{span}\{p_h,q_h\}.
\]

After concatenation and output projection, the full multi-head output can be written as
\[
h^{(1)}_{t,j,:}
=
c_{\mathrm{out}}
+
\sum_{h=1}^H A_t^{(h)} r_h
+
\sum_{h=1}^H B_t^{(h)} s_h,
\]
where
\[
r_h=W_O^{(h)}p_h,
\qquad
s_h=W_O^{(h)}q_h,
\]
and \(A_t^{(h)},B_t^{(h)}\) are input-dependent scalars.

Hence all outputs belong to the affine subspace
\[
c_{\mathrm{out}}+U_j,
\qquad
U_j:=\operatorname{span}\{r_1,s_1,\dots,r_H,s_H\},
\]
with
\[
\dim(U_j)\le 2H.
\]

\textbf{Step 3: The Jacobian columns must lie in this subspace.}

Since \(h^{(1)}_{t,j,:}\) always belongs to the affine space \(c_{\mathrm{out}}+U_j\), any derivative of \(h^{(1)}_{t,j,:}\) with respect to an input value \(x_{k,j}\) must belong to the corresponding direction space \(U_j\). Therefore,
\[
\frac{\partial h^{(1)}_{t,j,:}}{\partial x_{k,j}}\in U_j,
\qquad \forall t,k.
\]
This immediately implies
\[
\operatorname{rank}(K_{t,j})\le \dim(U_j)\le 2H.
\]

Stacking over all output positions \(t\) yields the bound
\[
\operatorname{rank}(J^{\mathrm{attn}}_j)\le 2HN.
\]

Finally, composing with any linearized downstream map \(L\) cannot increase rank:
\[
\operatorname{rank}(LJ^{\mathrm{attn}}_j)
\le
\operatorname{rank}(J^{\mathrm{attn}}_j)
\le 2HN.
\]

\paragraph{Interpretation.}
The proposition formalizes the key point behind \emph{low value sensitivity}:
\begin{itemize}
    \item the standard affine scalar tokenizer injects the value of a numerical feature through a very small number of directions;
    \item self-attention can mix samples, but it cannot magically create a large number of new value-sensitive directions from such a narrow input channel;
    \item as a result, the shallow-layer Jacobian with respect to feature values remains low-rank even when the hidden width \(d\) is very large.
\end{itemize}

Therefore, simply widening the model does not remove the bottleneck. To improve value sensitivity, one must enrich the \emph{value-dependent input geometry itself}, rather than only increasing hidden dimension.

\paragraph{Implication.}
Proposition~\ref{prop:jacobian_bottleneck_s3} connects Theorem~\ref{thm:value_rank1} to deep computation: although attention is nonlinear, the \emph{tangent space} through which a single feature's values influence shallow representations remains low-dimensional under \eqref{eq:lin_tokenizer}. This provides a mechanism-level explanation for two empirical phenomena commonly observed in TFMs: (i) early activations exhibit low effective rank (high redundancy), and (ii) widening $d$ yields diminishing returns unless the tokenizer increases the intrinsic value-sensitive subspace.

\subsection{How RaBEL Breaks the Bottleneck}
\label{subsec:rabel_preview_s3}

RaBEL replaces the scalar identity map in \eqref{eq:lin_tokenizer} with a compact radial-basis expansion $\phi_j(x_{i,j})\in\mathbb{R}^{M}$ and then projects to $\mathbb{R}^d$:
\begin{equation}
\label{eq:rabel_preview_s3}
e_{i,j}
\;=\;
\mathrm{LN}\!\big(W_{\mathrm{rbf}}\,\phi_{j}(x_{i,j}) + b_{\mathrm{rbf}}\big)\in\mathbb{R}^{d},
\end{equation}
The pre-LN term $W_{\mathrm{rbf}}\phi_j(x_{i,j})$ can vary along multiple directions as $x_{i,j}$ changes, allowing within-feature value variation to have substantially higher effective rank before reaching the backbone. This is the intended mechanism: \textbf{inject localized nonlinearity at the tokenizer so early layers are value-aware without requiring depth to manufacture curvature.}

\subsection{Design Takeaway}
\label{subsec:takeaway_s3}

IDs/positional signals help the model identify \emph{which} feature a token came from, but they do not increase the degrees of freedom through which \emph{values} enter the network. Under standard affine tokenization, within-feature value variation is inherently low-dimensional (Theorem~\ref{thm:value_rank1}), leading to low-dimensional local value sensitivity in shallow layers (Proposition~\ref{prop:jacobian_bottleneck_s3}) and substantial redundancy.
RaBEL and the reordered bidirectional attention block introduced in \Cref{sec:method} are designed to address complementary failure modes: RaBEL expands the value-sensitive subspace at the tokenizer, and the reordered block improves conditioning and routes cross-sample computation into the representation used for prediction.

\section{Additional Experimental Details and Results}
\label{app:exp}
\subsection{Baseline Setup}
All baseline results follow the TALENT benchmark protocol. Each dataset is split into 64/16/20 train/val/test; hyperparameters are tuned on the validation split with Optuna (100 trials per method–dataset pair), with early stopping based on validation accuracy for classification and RMSE for regression. The selected configuration is then retrained and evaluated over 15 random seeds, and the final result is reported as the seed average. Importantly, TALENT uses method-specific search spaces, not a single shared one. For foundation models such as TabPFN-v2, we evaluate on the full dataset. We also strictly unify preprocessing and ensemble count across methods, while otherwise using each method’s standard settings. Thus, the comparisons are controlled and reproducible.

\subsection{Dataset Statistics for MLP-based Experiments}

We collected datasets for both classification and regression tasks and summarized their key statistics, including the number of training samples, total features, categorical features, number of classes (for classification datasets), and missing value rates. These results are presented in \Cref{tab:dataset_stats}.
\begin{table*}[htbp]
\centering
\small
\caption{Statistics of datasets. These datasets cover varying sample sizes, tasks, data distributions, and levels of missingness. AUC: area under the ROC curve; Acc.: classification accuracy; R\textsuperscript{2}: coefficient of determination; RMSE: root mean squared error.}
\resizebox{\textwidth}{!}{
\begin{tabular}{l|ccccccccccccc}
\toprule
\textbf{Dataset} &
GC & CP & AC &
CB & UK & BN & MA &
CD & MH & CC &
HS & SC & MV \\
\midrule
\#train samples &
205 & 696 & 7000 & 5455 & 282 & 44236 & 30305 &
726 & 9643 & 671 & 15129 & 36421 & 28537 \\
\#features &
9 & 3 & 7 & 95 & 5 & 9 & 4 &
12 & 15 & 8 & 20 & 4 & 10 \\
\#cate features &
7 & 1 & 5 & 2 & 0 & 7 & 0 &
1 & 2 & 0 & 4 & 1 & 3 \\
Metric1 &
AUC & AUC & AUC & AUC & AUC & AUC & AUC &
R\textsuperscript{2} & R\textsuperscript{2} & R\textsuperscript{2} & R\textsuperscript{2} & R\textsuperscript{2} & R\textsuperscript{2} \\
Metric2 &
Acc. & Acc. & Acc. & Acc. & Acc. & Acc. & Acc. &
RMSE & RMSE & RMSE & RMSE & RMSE & RMSE \\
\#classes &
7 & 4 & 2 & 2 & 5 & 3 & 2 &
-- & -- & -- & -- & -- & -- \\
\#missing ratio &
0.2366 & 0 & 0.2892 & 0 & 0 & 0 & 0 &
0.00056 & 0 & 0 & 0 & 0 & 0 \\
\bottomrule
\end{tabular}
}

\vspace{2mm}
\begin{minipage}{0.98\textwidth}
\footnotesize
\textbf{Dataset abbreviations.}
GC: guitar-chord-finger-positioning;
CP: companion-plants;
AC: ad-click-prediction-dataset;
CB: company\_bankruptcy\_prediction;
UK: user-knowledge;
BN: BNG(cmc);
MA: malware-analysis-datasets-pe-section-headers;
CD: 1000-Cameras-Dataset;
MH: miami\_housing;
CC: concrete\_compressive\_strength;
HS: house\_sales;
SC: seattlecrime6;
MV: mv.
\end{minipage}

\label{tab:dataset_stats}
\end{table*}


\subsection{Ablation on RaBEL}
\label{appendix:ablation_rabel}
We conducted a comprehensive ablation study to investigate the impact of key hyperparameters and design choices of RaBEL on model performance (Tables \ref{tab:token_dim}--\ref{tab:sigma_learn}). regarding model capacity, we observe that an embedding dimension of 32 achieves the optimal trade-off (\Cref{tab:token_dim}), while increasing the number of kernels consistently improves representation power, peaking at 64 kernels (\Cref{tab:n_kernels}). For kernel configuration, a fixed bandwidth $\sigma=1.0$ with uniformly distributed kernels proves most effective, outperforming learnable $\sigma$ and random distributions (\Cref{tab:sigma}, \ref{tab:kernel_form}, \ref{tab:sigma_learn}). Most notably, the initialization strategy plays a critical role; as shown in \Cref{tab:init_method}, orthogonal initialization yields a substantial performance gain (reaching 89.03\% AUC) compared to standard Xavier~\cite{xavierinit2010} or Kaiming~\cite{kaiminginit2015} initializations, highlighting the importance of orthogonality in the exponent embedding.
\begin{table}[htbp]
    \centering
    \begin{minipage}[t]{0.48\textwidth}
        \centering
        \small
        \caption{Impact of token embedding dimension}
        \label{tab:token_dim}
        \begin{tabular}{lccc}
            \toprule
            \textbf{dim} & \textbf{AUC} & \textbf{Acc.} & \textbf{F1} \\
            \midrule
            16 & 84.51 & 77.68 & 68.32 \\
            32 & \textbf{85.17} & \textbf{77.95} & \textbf{69.12} \\
            64 & 84.78 & 77.82 & 68.71 \\
            \bottomrule
        \end{tabular}
    \end{minipage}
    \hfill
    \begin{minipage}[t]{0.48\textwidth}
        \centering
        \small
        \caption{Impact of number of kernels}
        \label{tab:n_kernels}
        \begin{tabular}{lccc}
            \toprule
            \textbf{n kernels} & \textbf{AUC} & \textbf{Acc.} & \textbf{F1} \\
            \midrule
            16 & 84.20 & 77.12 & 68.53 \\
            32 & 84.76 & 77.68 & 68.88 \\
            64 & \textbf{85.19} & \textbf{77.95} & \textbf{69.08} \\
            \bottomrule
        \end{tabular}
    \end{minipage}

    \vspace{0.3cm} 


    \begin{minipage}[t]{0.48\textwidth}
        \centering
        \small
        \caption{Impact of $\sigma$ value}
        \label{tab:sigma}
        \begin{tabular}{lccc}
            \toprule
            \textbf{$\sigma$} & \textbf{AUC} & \textbf{Acc.} & \textbf{F1} \\
            \midrule
            0.5 & 84.66 & 77.62 & 68.85 \\
            1.0 & \textbf{84.75} & \textbf{77.93} & \textbf{69.01} \\
            2.0 & 84.67 & 77.44 & 68.04 \\
            \bottomrule
        \end{tabular}
    \end{minipage}
    \hfill
    \begin{minipage}[t]{0.48\textwidth}
        \centering
        \small
        \caption{Exponent embedding init method}
        \label{tab:init_method}
        \begin{tabular}{lccc}
            \toprule
            \textbf{method} & \textbf{AUC} & \textbf{Acc.} & \textbf{F1} \\
            \midrule
            orthogonal & \textbf{89.03} & \textbf{77.67} & \textbf{68.95} \\
            xavier & 84.80 & 77.57 & 68.53 \\
            kaiming & 84.68 & 77.44 & 68.27 \\
            \bottomrule
        \end{tabular}
    \end{minipage}

    \vspace{0.3cm} 


    \begin{minipage}[t]{0.48\textwidth}
        \centering
        \small
        \caption{Impact of kernels form}
        \label{tab:kernel_form}
        \begin{tabular}{lccc}
            \toprule
            \textbf{form} & \textbf{AUC} & \textbf{Acc.} & \textbf{F1} \\
            \midrule
            random & 84.43 & 77.67 & 68.55 \\
            uniform & \textbf{84.99} & \textbf{77.93} & \textbf{69.07} \\
            \bottomrule
        \end{tabular}
    \end{minipage}
    \hfill
    \begin{minipage}[t]{0.48\textwidth}
        \centering
        \small
        \caption{Learnable vs. Fixed $\sigma$ (init $\sigma=1$)}
        \label{tab:sigma_learn}
        \begin{tabular}{lccc}
            \toprule
            \textbf{mode} & \textbf{AUC} & \textbf{Acc.} & \textbf{F1} \\
            \midrule
            learn & 84.65 & 77.59 & 68.53 \\
            fix & \textbf{84.81} & \textbf{77.93} & \textbf{69.02} \\
            \bottomrule
        \end{tabular}
    \end{minipage}

\end{table}

\subsection{Targeted Comparison with TabPFN-v2.5}

Our main experimental campaign was conducted in September 2025, and LimiX-2M was released in October 2025. At that time, TabPFN-v2.5 \citep{grinsztajn2026tabpfn25advancingstateart} was not yet publicly available. The TabPFN-v2.5 technical report and model release appeared later in November 2025. Therefore, our primary comparisons in the main paper focus on the publicly available tabular foundation model baselines at the time of our main experiments, including TabPFN-v2 and TabICL.

After TabPFN-v2.5 became available, and in response to reviewer interest in this recent baseline, we conducted an additional targeted comparison on the BCCO benchmark under the same evaluation protocol. This experiment is intended as a post-release sanity check rather than a full re-benchmarking across all datasets and seeds. As shown in Table~\ref{tab:bcco_tabpfn25}, LimiX-2M remains competitive with TabPFN-v2.5: it achieves higher AUC and accuracy on BCCO-CLS, and obtains comparable regression performance on BCCO-REG. These results suggest that the main empirical observations of this paper are not solely due to comparison with an earlier PFN baseline.

\begin{table}[t]
\centering
\caption{Performance comparison on BCCO-CLS and BCCO-REG.}
\label{tab:bcco_tabpfn25}
\begin{tabular}{lcccc}
\toprule
\multirow{2}{*}{Model} & \multicolumn{2}{c}{BCCO-CLS} & \multicolumn{2}{c}{BCCO-REG} \\
\cmidrule(lr){2-3} \cmidrule(lr){4-5}
 & AUC & Acc. & R\textsuperscript{2} & RMSE \\
\midrule
TabPFN-v2.5        & 0.852 & 0.778 & 0.780 & 0.391 \\
(Real) TabPFN-v2.5 & 0.853 & 0.779 & 0.777 & 0.392 \\
MiniX              & 0.858 & 0.787 & 0.785 & 0.392 \\
\bottomrule
\end{tabular}
\end{table}

\subsection{Results on Open-Source Benchmarks}
In this section, we provide the complete, dataset-level performance metrics for all open-source benchmarks evaluated in this study. While the main text reports aggregated performance, the following tables present the specific results for both classification and regression tasks across the BCCO, OpenML-cc18, PFN, TabArena, TabZilla, TALENT, and CTR23 benchmarks. These detailed breakdowns allow for a granular analysis of model performance on individual datasets.

\label{appendix:benchmark_res}
\begin{table}[htbp]
\centering
\small
\caption{Classification results on \textbf{BCCO-CLS}, sorted by mean AUC in descending order, with the parameter counts of all foundation models highlighted in blue.}
\label{tab:cls_bcco}
\begin{tabular}{l|cccccc}
\toprule
& \multicolumn{6}{c}{\rule{0pt}{20pt}\raisebox{5pt}{BCCO-CLS}} \\ \midrule
& \multicolumn{3}{c|}{Mean} & \multicolumn{3}{c}{Rank} \\
\cmidrule(lr){2-4}\cmidrule(lr){5-7}
\multicolumn{1}{l|}{Model} & AUC ($\uparrow$) & Acc. ($\uparrow$) & \multicolumn{1}{c|}{F1 ($\uparrow$)} & AUC ($\downarrow$) & Acc. ($\downarrow$) & F1 ($\downarrow$) \\
\midrule
\multicolumn{1}{l|}{LimiX-16M \textcolor{blue}{(16.52M)}} & 0.871 & 0.804 & \multicolumn{1}{c|}{0.731} & 2.679 & 2.387 & 3.075 \\
\rowcolor{gray!30}
\multicolumn{1}{l|}{LimiX-2M \textcolor{blue}{(1.92M)}} & 0.858 & 0.787 & \multicolumn{1}{c|}{0.701} & 6.406 & 6.689 & 8.830 \\
\multicolumn{1}{l|}{TabICL \textcolor{blue}{(27.10M)}} & 0.847 & 0.768 & \multicolumn{1}{c|}{0.672} & 7.623 & 9.226 & 11.396 \\
\multicolumn{1}{l|}{AutoGluon} & 0.846 & 0.771 & \multicolumn{1}{c|}{0.677} & 8.792 & 8.943 & 10.425 \\
\multicolumn{1}{l|}{TabPFN-v2 \textcolor{blue}{(7.24M)}} & 0.843 & 0.772 & \multicolumn{1}{c|}{0.679} & 9.575 & 10.274 & 11.896 \\
\multicolumn{1}{l|}{XGBoost} & 0.834 & 0.762 & \multicolumn{1}{c|}{0.674} & 11.160 & 11.491 & 12.217 \\
\multicolumn{1}{l|}{Mitra \textcolor{blue}{(75.67M)}} & 0.836 & 0.764 & \multicolumn{1}{c|}{0.664} & 11.566 & 12.321 & 14.236 \\
\multicolumn{1}{l|}{LightGBM} & 0.832 & 0.763 & \multicolumn{1}{c|}{0.678} & 12.189 & 11.406 & 12.057 \\
\multicolumn{1}{l|}{TabM} & 0.827 & 0.763 & \multicolumn{1}{c|}{0.666} & 12.660 & 11.670 & 12.292 \\
\multicolumn{1}{l|}{RF} & 0.829 & 0.756 & \multicolumn{1}{c|}{0.652} & 12.802 & 13.104 & 14.453 \\
\multicolumn{1}{l|}{CatBoost} & 0.829 & 0.757 & \multicolumn{1}{c|}{0.664} & 13.358 & 12.472 & 13.264 \\
\multicolumn{1}{l|}{ET} & 0.825 & 0.745 & \multicolumn{1}{c|}{0.618} & 13.953 & 15.557 & 18.368 \\
\multicolumn{1}{l|}{RealMLP} & 0.824 & 0.759 & \multicolumn{1}{c|}{0.673} & 14.717 & 13.245 & 12.019 \\
\multicolumn{1}{l|}{FT-Transformer} & 0.813 & 0.744 & \multicolumn{1}{c|}{0.642} & 15.189 & 15.698 & 16.009 \\
\multicolumn{1}{l|}{T2G-Former} & 0.808 & 0.742 & \multicolumn{1}{c|}{0.646} & 16.255 & 16.009 & 15.151 \\
\multicolumn{1}{l|}{ModernNCA} & 0.815 & 0.752 & \multicolumn{1}{c|}{0.658} & 17.123 & 17.292 & 16.868 \\
\multicolumn{1}{l|}{MLP-PLR} & 0.804 & 0.733 & \multicolumn{1}{c|}{0.635} & 17.481 & 17.783 & 16.925 \\
\multicolumn{1}{l|}{ExcelFormer} & 0.810 & 0.742 & \multicolumn{1}{c|}{0.655} & 17.698 & 17.019 & 15.406 \\
\multicolumn{1}{l|}{TabR} & 0.809 & 0.750 & \multicolumn{1}{c|}{0.657} & 18.179 & 15.925 & 14.915 \\
\multicolumn{1}{l|}{DCN-v2} & 0.794 & 0.725 & \multicolumn{1}{c|}{0.618} & 18.660 & 18.708 & 18.274 \\
\multicolumn{1}{l|}{ResNet} & 0.800 & 0.728 & \multicolumn{1}{c|}{0.641} & 18.717 & 18.981 & 17.594 \\
\multicolumn{1}{l|}{TANGOS} & 0.799 & 0.731 & \multicolumn{1}{c|}{0.641} & 19.038 & 18.689 & 17.028 \\
\multicolumn{1}{l|}{MLP} & 0.787 & 0.720 & \multicolumn{1}{c|}{0.614} & 20.349 & 19.774 & 20.575 \\
\multicolumn{1}{l|}{SAINT} & 0.791 & 0.726 & \multicolumn{1}{c|}{0.623} & 20.594 & 19.406 & 17.953 \\
\multicolumn{1}{l|}{AutoInt} & 0.779 & 0.718 & \multicolumn{1}{c|}{0.601} & 22.132 & 21.358 & 21.547 \\
\multicolumn{1}{l|}{DANets} & 0.771 & 0.705 & \multicolumn{1}{c|}{0.601} & 22.368 & 21.396 & 21.642 \\
\multicolumn{1}{l|}{SNN} & 0.773 & 0.708 & \multicolumn{1}{c|}{0.584} & 22.585 & 22.094 & 22.245 \\
\multicolumn{1}{l|}{TabTransformer} & 0.762 & 0.699 & \multicolumn{1}{c|}{0.566} & 22.594 & 21.764 & 22.151 \\
\multicolumn{1}{l|}{SwitchTab} & 0.766 & 0.700 & \multicolumn{1}{c|}{0.590} & 23.509 & 23.972 & 22.755 \\
\multicolumn{1}{l|}{TabCaps} & 0.744 & 0.701 & \multicolumn{1}{c|}{0.580} & 25.755 & 23.708 & 24.066 \\
\multicolumn{1}{l|}{NODE} & 0.754 & 0.695 & \multicolumn{1}{c|}{0.531} & 26.000 & 24.453 & 27.226 \\
\multicolumn{1}{l|}{TabNet} & 0.712 & 0.685 & \multicolumn{1}{c|}{0.561} & 29.217 & 26.557 & 26.613 \\
\multicolumn{1}{l|}{GrowNet} & 0.682 & 0.641 & \multicolumn{1}{c|}{0.522} & 29.679 & 29.358 & 28.302 \\
\bottomrule
\end{tabular}
\end{table}
\begin{table}[htbp]
\centering
\small
\caption{Regression results on \textbf{BCCO-REG}, sorted by mean R\textsuperscript2 in descending order, with the parameter counts of all foundation models highlighted in blue.}
\label{tab:reg_bcco}
\begin{tabular}{l|cc|cc}
\toprule
& \multicolumn{4}{c}{\rule{0pt}{18pt}\raisebox{4pt}{BCCO-REG}} \\ \midrule
& \multicolumn{2}{c|}{Mean} & \multicolumn{2}{c}{Rank} \\
\cmidrule(lr){2-3}\cmidrule(lr){4-5}
\multicolumn{1}{l|}{Model} & R\textsuperscript{2} ($\uparrow$) & RMSE ($\downarrow$) & R\textsuperscript{2} ($\downarrow$) & RMSE ($\downarrow$) \\
\midrule
LimiX-16M \textcolor{blue}{(16.52M)} & 0.794 & 0.386 & 3.860 & 4.700 \\
AutoGluon            & 0.781 & 0.398 & 5.140 & 5.120 \\
TabM                 & 0.773 & 0.397 & 5.400 & 5.400 \\
RealMLP              & 0.766 & 0.402 & 5.960 & 5.880 \\
TabPFN-v2 \textcolor{blue}{(7.24M)} & 0.772 & 0.404 & 6.440 & 6.340 \\
\rowcolor{gray!30}
LimiX-2M \textcolor{blue}{(1.92M)} & 0.785 & 0.392 & 6.580 & 6.460 \\
XGBoost              & 0.764 & 0.415 & 9.740 & 9.780 \\
LightGBM             & 0.715 & 0.423 & 10.060 & 10.040 \\
ET                   & 0.757 & 0.431 & 12.180 & 12.120 \\
CatBoost             & 0.741 & 0.427 & 12.660 & 12.460 \\
RF                   & 0.752 & 0.438 & 13.460 & 13.440 \\
ExcelFormer          & 0.743 & 0.443 & 13.640 & 13.700 \\
T2G-Former           & 0.743 & 0.442 & 14.940 & 14.880 \\
FT-Transformer       & 0.737 & 0.448 & 15.520 & 15.500 \\
DCN-v2               & 0.739 & 0.448 & 15.840 & 15.840 \\
TabR                 & 0.733 & 0.448 & 16.580 & 16.660 \\
ResNet               & 0.720 & 0.468 & 17.100 & 17.060 \\
ModernNCA            & 0.598 & 0.471 & 17.300 & 17.200 \\
Mitra \textcolor{blue}{(75.67M)} & 0.667 & 0.474 & 17.700 & 17.620 \\
TANGOS               & 0.719 & 0.468 & 17.740 & 17.740 \\
MLP-PLR              & 0.734 & 0.453 & 17.800 & 17.800 \\
SAINT                & 0.701 & 0.481 & 18.540 & 18.560 \\
MLP                  & 0.701 & 0.487 & 19.020 & 19.080 \\
AutoInt              & 0.724 & 0.465 & 19.180 & 19.100 \\
NODE                 & 0.643 & 0.543 & 22.100 & 22.020 \\
TabNet               & 0.670 & 0.516 & 22.720 & 22.660 \\
DNNR                 & -2.152 & 1.329 & 22.860 & 22.880 \\
SNN                  & 0.434 & 0.720 & 27.220 & 27.220 \\
GrowNet              & 0.201 & 0.864 & 28.880 & 28.860 \\
DANets               & 0.005 & 0.979 & 30.480 & 30.500 \\
TabTransformer       & -0.000 & 0.981 & 30.640 & 30.660 \\
SwitchTab            & 0.001 & 0.981 & 30.720 & 30.720 \\
\bottomrule
\end{tabular}
\end{table}
\begin{table}[htbp]
\centering
\small
\caption{Classification results on \textbf{CC18}, sorted by mean AUC in descending order, with the parameter counts of all foundation models highlighted in blue.}
\label{tab:cls_cc18}
\begin{tabular}{l|cccccc}
\toprule
& \multicolumn{6}{c}{\rule{0pt}{20pt}\raisebox{5pt}{OpenML-cc18}} \\ \midrule
& \multicolumn{3}{c|}{Mean} & \multicolumn{3}{c}{Rank} \\
\cmidrule(lr){2-4}\cmidrule(lr){5-7}
\multicolumn{1}{l|}{Model} & AUC ($\uparrow$) & Acc. ($\uparrow$) & \multicolumn{1}{c|}{F1 ($\uparrow$)} & AUC ($\downarrow$) & Acc. ($\downarrow$) & F1 ($\downarrow$) \\
\midrule
\multicolumn{1}{l|}{LimiX-16M \textcolor{blue}{(16.52M)}} & 0.939 & 0.893 & \multicolumn{1}{c|}{0.807} & 4.475 & 5.712 & 4.780 \\
\rowcolor{gray!30}
\multicolumn{1}{l|}{LimiX-2M \textcolor{blue}{(1.92M)}} & 0.935 & 0.892 & \multicolumn{1}{c|}{0.799} & 5.475 & 5.695 & 6.322 \\
\multicolumn{1}{l|}{AutoGluon} & 0.931 & 0.885 & \multicolumn{1}{c|}{0.785} & 7.254 & 7.814 & 8.305 \\
\multicolumn{1}{l|}{TabPFN-v2 \textcolor{blue}{(7.24M)}} & 0.929 & 0.887 & \multicolumn{1}{c|}{0.786} & 8.593 & 8.305 & 8.780 \\
\multicolumn{1}{l|}{TabICL \textcolor{blue}{(27.10M)}} & 0.933 & 0.881 & \multicolumn{1}{c|}{0.786} & 8.915 & 10.695 & 10.508 \\
\multicolumn{1}{l|}{TabM} & 0.926 & 0.881 & \multicolumn{1}{c|}{0.775} & 9.339 & 11.559 & 11.339 \\
\multicolumn{1}{l|}{RealMLP} & 0.925 & 0.883 & \multicolumn{1}{c|}{0.789} & 10.712 & 9.153 & 8.220 \\
\multicolumn{1}{l|}{XGBoost} & 0.928 & 0.879 & \multicolumn{1}{c|}{0.770} & 10.746 & 11.864 & 11.932 \\
\multicolumn{1}{l|}{LightGBM} & 0.927 & 0.879 & \multicolumn{1}{c|}{0.770} & 11.102 & 11.102 & 11.271 \\
\multicolumn{1}{l|}{CatBoost} & 0.926 & 0.870 & \multicolumn{1}{c|}{0.765} & 13.322 & 14.729 & 14.746 \\
\multicolumn{1}{l|}{Mitra \textcolor{blue}{(75.67M)}} & 0.922 & 0.869 & \multicolumn{1}{c|}{0.739} & 13.712 & 15.102 & 16.915 \\
\multicolumn{1}{l|}{TANGOS} & 0.914 & 0.868 & \multicolumn{1}{c|}{0.758} & 14.915 & 14.814 & 15.000 \\
\multicolumn{1}{l|}{ExcelFormer} & 0.919 & 0.872 & \multicolumn{1}{c|}{0.770} & 14.966 & 13.915 & 13.763 \\
\multicolumn{1}{l|}{RF} & 0.925 & 0.872 & \multicolumn{1}{c|}{0.757} & 15.034 & 14.305 & 15.339 \\
\multicolumn{1}{l|}{ET} & 0.924 & 0.864 & \multicolumn{1}{c|}{0.717} & 15.051 & 17.017 & 19.153 \\
\multicolumn{1}{l|}{ResNet} & 0.917 & 0.865 & \multicolumn{1}{c|}{0.765} & 15.102 & 14.983 & 14.068 \\
\multicolumn{1}{l|}{MLP} & 0.913 & 0.861 & \multicolumn{1}{c|}{0.742} & 15.508 & 15.712 & 16.797 \\
\multicolumn{1}{l|}{T2G-Former} & 0.912 & 0.864 & \multicolumn{1}{c|}{0.748} & 16.864 & 16.407 & 16.203 \\
\multicolumn{1}{l|}{MLP-PLR} & 0.902 & 0.864 & \multicolumn{1}{c|}{0.733} & 17.254 & 17.017 & 17.407 \\
\multicolumn{1}{l|}{ModernNCA} & 0.912 & 0.865 & \multicolumn{1}{c|}{0.747} & 18.153 & 17.475 & 16.797 \\
\multicolumn{1}{l|}{TabR} & 0.907 & 0.870 & \multicolumn{1}{c|}{0.757} & 18.254 & 14.797 & 14.441 \\
\multicolumn{1}{l|}{FT-Transformer} & 0.909 & 0.862 & \multicolumn{1}{c|}{0.737} & 18.373 & 16.864 & 17.983 \\
\multicolumn{1}{l|}{AutoInt} & 0.909 & 0.852 & \multicolumn{1}{c|}{0.727} & 19.864 & 19.847 & 19.898 \\
\multicolumn{1}{l|}{DANets} & 0.904 & 0.844 & \multicolumn{1}{c|}{0.712} & 19.898 & 19.288 & 20.475 \\
\multicolumn{1}{l|}{DCN-v2} & 0.904 & 0.856 & \multicolumn{1}{c|}{0.727} & 20.525 & 19.339 & 19.966 \\
\multicolumn{1}{l|}{SNN} & 0.901 & 0.839 & \multicolumn{1}{c|}{0.694} & 22.441 & 22.186 & 22.847 \\
\multicolumn{1}{l|}{SwitchTab} & 0.887 & 0.823 & \multicolumn{1}{c|}{0.666} & 23.441 & 22.966 & 23.712 \\
\multicolumn{1}{l|}{TabCaps} & 0.877 & 0.852 & \multicolumn{1}{c|}{0.688} & 23.881 & 20.763 & 22.373 \\
\multicolumn{1}{l|}{TabTransformer} & 0.854 & 0.799 & \multicolumn{1}{c|}{0.631} & 24.492 & 23.475 & 24.186 \\
\multicolumn{1}{l|}{NODE} & 0.894 & 0.813 & \multicolumn{1}{c|}{0.622} & 25.237 & 26.305 & 28.390 \\
\multicolumn{1}{l|}{SAINT} & 0.531 & 0.508 & \multicolumn{1}{c|}{0.451} & 26.814 & 26.288 & 24.831 \\
\multicolumn{1}{l|}{TabNet} & 0.869 & 0.822 & \multicolumn{1}{c|}{0.664} & 27.390 & 25.034 & 26.780 \\
\multicolumn{1}{l|}{GrowNet} & 0.819 & 0.749 & \multicolumn{1}{c|}{0.571} & 27.983 & 28.271 & 27.559 \\
\bottomrule
\end{tabular}
\end{table}
\begin{table}[htbp]
\centering
\small
\caption{Classification results on \textbf{PFN-CLS}, sorted by mean AUC in descending order, with the parameter counts of all foundation models highlighted in blue.}
\label{tab:cls_pfn}
\begin{tabular}{l|cccccc}
\toprule
& \multicolumn{6}{c}{\rule{0pt}{20pt}\raisebox{5pt}{PFN-CLS}} \\ \midrule
& \multicolumn{3}{c|}{Mean} & \multicolumn{3}{c}{Rank} \\
\cmidrule(lr){2-4}\cmidrule(lr){5-7}
\multicolumn{1}{l|}{Model} & AUC ($\uparrow$) & Acc. ($\uparrow$) & \multicolumn{1}{c|}{F1 ($\uparrow$)} & AUC ($\downarrow$) & Acc. ($\downarrow$) & F1 ($\downarrow$) \\
\midrule
\multicolumn{1}{l|}{LimiX-16M \textcolor{blue}{(16.52M)}} & 0.923 & 0.862 & \multicolumn{1}{c|}{0.786} & 2.207 & 3.276 & 3.138 \\
\rowcolor{gray!30}
\multicolumn{1}{l|}{LimiX-2M \textcolor{blue}{(1.92M)}} & 0.913 & 0.848 & \multicolumn{1}{c|}{0.766} & 4.207 & 6.828 & 6.103 \\
\multicolumn{1}{l|}{TabPFN-v2 \textcolor{blue}{(7.24M)}} & 0.910 & 0.845 & \multicolumn{1}{c|}{0.756} & 5.828 & 7.586 & 7.690 \\
\multicolumn{1}{l|}{AutoGluon} & 0.906 & 0.835 & \multicolumn{1}{c|}{0.738} & 6.207 & 8.207 & 8.000 \\
\multicolumn{1}{l|}{TabICL \textcolor{blue}{(27.10M)}} & 0.903 & 0.832 & \multicolumn{1}{c|}{0.742} & 7.414 & 9.241 & 9.724 \\
\multicolumn{1}{l|}{XGBoost} & 0.898 & 0.831 & \multicolumn{1}{c|}{0.733} & 9.655 & 8.069 & 9.103 \\
\multicolumn{1}{l|}{Mitra \textcolor{blue}{(75.67M)}} & 0.897 & 0.826 & \multicolumn{1}{c|}{0.719} & 10.069 & 10.828 & 12.655 \\
\multicolumn{1}{l|}{LightGBM} & 0.893 & 0.826 & \multicolumn{1}{c|}{0.725} & 10.103 & 9.483 & 10.034 \\
\multicolumn{1}{l|}{TabM} & 0.895 & 0.829 & \multicolumn{1}{c|}{0.734} & 10.690 & 10.276 & 10.724 \\
\multicolumn{1}{l|}{CatBoost} & 0.895 & 0.819 & \multicolumn{1}{c|}{0.720} & 11.276 & 11.034 & 12.379 \\
\multicolumn{1}{l|}{RealMLP} & 0.889 & 0.823 & \multicolumn{1}{c|}{0.732} & 11.655 & 12.897 & 12.448 \\
\multicolumn{1}{l|}{RF} & 0.896 & 0.822 & \multicolumn{1}{c|}{0.721} & 12.172 & 11.724 & 12.897 \\
\multicolumn{1}{l|}{ET} & 0.893 & 0.809 & \multicolumn{1}{c|}{0.675} & 13.103 & 14.483 & 16.379 \\
\multicolumn{1}{l|}{ExcelFormer} & 0.883 & 0.812 & \multicolumn{1}{c|}{0.713} & 15.655 & 13.379 & 14.828 \\
\multicolumn{1}{l|}{ResNet} & 0.869 & 0.801 & \multicolumn{1}{c|}{0.700} & 16.414 & 16.586 & 16.000 \\
\multicolumn{1}{l|}{TANGOS} & 0.865 & 0.797 & \multicolumn{1}{c|}{0.698} & 17.310 & 18.000 & 16.310 \\
\multicolumn{1}{l|}{MLP} & 0.866 & 0.795 & \multicolumn{1}{c|}{0.695} & 18.586 & 17.690 & 17.207 \\
\multicolumn{1}{l|}{T2G-Former} & 0.848 & 0.792 & \multicolumn{1}{c|}{0.688} & 19.621 & 18.034 & 17.931 \\
\multicolumn{1}{l|}{ModernNCA} & 0.860 & 0.798 & \multicolumn{1}{c|}{0.692} & 19.724 & 17.828 & 16.655 \\
\multicolumn{1}{l|}{FT-Transformer} & 0.849 & 0.789 & \multicolumn{1}{c|}{0.683} & 20.414 & 19.931 & 18.621 \\
\multicolumn{1}{l|}{SwitchTab} & 0.858 & 0.776 & \multicolumn{1}{c|}{0.626} & 20.586 & 21.517 & 21.103 \\
\multicolumn{1}{l|}{MLP-PLR} & 0.848 & 0.786 & \multicolumn{1}{c|}{0.650} & 20.724 & 21.034 & 22.103 \\
\multicolumn{1}{l|}{DANets} & 0.844 & 0.770 & \multicolumn{1}{c|}{0.631} & 21.138 & 20.724 & 21.862 \\
\multicolumn{1}{l|}{TabCaps} & 0.834 & 0.788 & \multicolumn{1}{c|}{0.636} & 22.448 & 20.276 & 21.448 \\
\multicolumn{1}{l|}{TabR} & 0.842 & 0.789 & \multicolumn{1}{c|}{0.688} & 23.034 & 19.655 & 18.241 \\
\multicolumn{1}{l|}{TabTransformer} & 0.821 & 0.761 & \multicolumn{1}{c|}{0.604} & 23.517 & 22.621 & 24.414 \\
\multicolumn{1}{l|}{DCN-v2} & 0.846 & 0.771 & \multicolumn{1}{c|}{0.633} & 24.172 & 24.862 & 25.172 \\
\multicolumn{1}{l|}{NODE} & 0.844 & 0.754 & \multicolumn{1}{c|}{0.535} & 24.241 & 25.828 & 28.034 \\
\multicolumn{1}{l|}{AutoInt} & 0.838 & 0.772 & \multicolumn{1}{c|}{0.640} & 24.345 & 23.172 & 23.069 \\
\multicolumn{1}{l|}{SAINT} & 0.708 & 0.669 & \multicolumn{1}{c|}{0.563} & 24.448 & 23.379 & 22.310 \\
\multicolumn{1}{l|}{SNN} & 0.831 & 0.762 & \multicolumn{1}{c|}{0.595} & 25.517 & 23.069 & 25.103 \\
\multicolumn{1}{l|}{TabNet} & 0.825 & 0.768 & \multicolumn{1}{c|}{0.631} & 27.207 & 24.897 & 26.069 \\
\multicolumn{1}{l|}{GrowNet} & 0.756 & 0.689 & \multicolumn{1}{c|}{0.497} & 29.862 & 28.655 & 28.172 \\
\bottomrule
\end{tabular}
\end{table}
\begin{table}[htbp]
\centering
\small
\caption{Regression results on \textbf{PFN-REG}, sorted by mean R\textsuperscript2 in descending order, with the parameter counts of all foundation models highlighted in blue.}
\label{tab:reg_pfn}
\begin{tabular}{l|cc|cc}
\toprule
& \multicolumn{4}{c}{\rule{0pt}{18pt}\raisebox{4pt}{PFN-REG}} \\ \midrule
& \multicolumn{2}{c|}{Mean} & \multicolumn{2}{c}{Rank} \\
\cmidrule(lr){2-3}\cmidrule(lr){4-5}
\multicolumn{1}{l|}{Model} & R\textsuperscript{2} ($\uparrow$) & RMSE ($\downarrow$) & R\textsuperscript{2} ($\downarrow$) & RMSE ($\downarrow$) \\
\midrule
LimiX-16M \textcolor{blue}{(16.52M)} & 0.682 & 0.468 & 5.148 & 6.148 \\
\rowcolor{gray!30}
LimiX-2M \textcolor{blue}{(1.92M)} & 0.687 & 0.464 & 5.889 & 5.889 \\
RealMLP              & 0.675 & 0.471 & 7.519 & 7.407 \\
AutoGluon            & 0.669 & 0.484 & 7.593 & 7.593 \\
TabM                 & 0.677 & 0.468 & 7.667 & 7.556 \\
TabPFN-v2 \textcolor{blue}{(7.24M)} & 0.667 & 0.478 & 8.630 & 8.370 \\
XGBoost              & 0.661 & 0.493 & 10.481 & 10.481 \\
LightGBM             & 0.656 & 0.499 & 11.111 & 11.037 \\
CatBoost             & 0.652 & 0.501 & 12.519 & 12.444 \\
ET                   & 0.643 & 0.520 & 12.630 & 12.556 \\
Mitra \textcolor{blue}{(75.67M)} & 0.620 & 0.534 & 14.333 & 14.407 \\
RF                   & 0.634 & 0.529 & 14.889 & 14.963 \\
T2G-Former           & 0.630 & 0.507 & 15.037 & 14.926 \\
ExcelFormer          & 0.637 & 0.517 & 15.407 & 15.444 \\
FT-Transformer       & 0.627 & 0.511 & 15.741 & 15.630 \\
MLP                  & 0.565 & 0.582 & 15.926 & 16.074 \\
MLP-PLR              & 0.629 & 0.516 & 16.074 & 16.148 \\
TabR                 & 0.628 & 0.511 & 16.148 & 16.000 \\
ModernNCA            & 0.636 & 0.512 & 16.222 & 16.407 \\
DCN-v2               & 0.629 & 0.514 & 16.444 & 16.259 \\
ResNet               & 0.588 & 0.559 & 16.481 & 16.519 \\
TANGOS               & 0.576 & 0.568 & 16.926 & 17.074 \\
SAINT                & -8.420 & 0.618 & 17.926 & 17.519 \\
AutoInt              & 0.599 & 0.544 & 20.185 & 20.074 \\
NODE                 & 0.487 & 0.668 & 21.741 & 21.556 \\
TabNet               & 0.416 & 0.647 & 22.963 & 22.889 \\
DNNR                 & -8.173 & 2.289 & 24.222 & 24.370 \\
SNN                  & 0.364 & 0.764 & 26.185 & 26.222 \\
GrowNet              & 0.087 & 0.939 & 27.556 & 27.519 \\
DANets               & 0.001 & 0.988 & 28.815 & 28.852 \\
SwitchTab            & -0.025 & 1.001 & 29.259 & 29.296 \\
TabTransformer       & -0.020 & 0.998 & 30.333 & 30.370 \\
\bottomrule
\end{tabular}
\end{table}
\begin{table}[htbp]
\centering
\small
\caption{Classification results on \textbf{TabArena-CLS}, sorted by mean AUC in descending order, with the parameter counts of all foundation models highlighted in blue.}
\label{tab:cls_tabarena}
\begin{tabular}{l|cccccc}
\toprule
& \multicolumn{6}{c}{\rule{0pt}{20pt}\raisebox{5pt}{TabArena}} \\ \midrule
& \multicolumn{3}{c|}{Mean} & \multicolumn{3}{c}{Rank} \\
\cmidrule(lr){2-4}\cmidrule(lr){5-7}
\multicolumn{1}{l|}{Model} & AUC ($\uparrow$) & Acc. ($\uparrow$) & \multicolumn{1}{c|}{F1 ($\uparrow$)} & AUC ($\downarrow$) & Acc. ($\downarrow$) & F1 ($\downarrow$) \\
\midrule
\multicolumn{1}{l|}{LimiX-16M \textcolor{blue}{(16.52M)}} & 0.849 & 0.877 & \multicolumn{1}{c|}{0.597} & 3.636 & 3.424 & 7.273 \\
\rowcolor{gray!30}
\multicolumn{1}{l|}{LimiX-2M \textcolor{blue}{(1.92M)}} & 0.846 & 0.876 & \multicolumn{1}{c|}{0.594} & 5.000 & 3.394 & 6.909 \\
\multicolumn{1}{l|}{AutoGluon} & 0.844 & 0.870 & \multicolumn{1}{c|}{0.574} & 5.909 & 5.606 & 9.545 \\
\multicolumn{1}{l|}{TabICL \textcolor{blue}{(27.10M)}} & 0.840 & 0.870 & \multicolumn{1}{c|}{0.553} & 7.182 & 6.636 & 11.000 \\
\multicolumn{1}{l|}{TabPFN-v2 \textcolor{blue}{(7.24M)}} & 0.838 & 0.872 & \multicolumn{1}{c|}{0.589} & 7.485 & 4.697 & 9.152 \\
\multicolumn{1}{l|}{LightGBM} & 0.841 & 0.868 & \multicolumn{1}{c|}{0.574} & 7.606 & 8.606 & 10.970 \\
\multicolumn{1}{l|}{XGBoost} & 0.838 & 0.867 & \multicolumn{1}{c|}{0.567} & 8.545 & 7.970 & 11.273 \\
\multicolumn{1}{l|}{RF} & 0.837 & 0.864 & \multicolumn{1}{c|}{0.558} & 10.061 & 8.697 & 12.545 \\
\multicolumn{1}{l|}{CatBoost} & 0.835 & 0.867 & \multicolumn{1}{c|}{0.574} & 10.273 & 7.818 & 10.242 \\
\multicolumn{1}{l|}{ET} & 0.833 & 0.857 & \multicolumn{1}{c|}{0.505} & 11.212 & 11.515 & 16.879 \\
\multicolumn{1}{l|}{Mitra \textcolor{blue}{(75.67M)}} & 0.815 & 0.862 & \multicolumn{1}{c|}{0.533} & 12.667 & 10.636 & 15.545 \\
\multicolumn{1}{l|}{TabM} & 0.807 & 0.855 & \multicolumn{1}{c|}{0.516} & 15.212 & 15.212 & 13.970 \\
\multicolumn{1}{l|}{ExcelFormer} & 0.810 & 0.849 & \multicolumn{1}{c|}{0.555} & 15.455 & 17.485 & 15.515 \\
\multicolumn{1}{l|}{RealMLP} & 0.809 & 0.751 & \multicolumn{1}{c|}{0.477} & 17.303 & 25.121 & 18.061 \\
\multicolumn{1}{l|}{MLP} & 0.772 & 0.822 & \multicolumn{1}{c|}{0.459} & 19.212 & 18.212 & 20.212 \\
\multicolumn{1}{l|}{TANGOS} & 0.791 & 0.844 & \multicolumn{1}{c|}{0.522} & 19.455 & 18.364 & 16.576 \\
\multicolumn{1}{l|}{T2G-Former} & 0.779 & 0.822 & \multicolumn{1}{c|}{0.482} & 19.697 & 20.576 & 16.273 \\
\multicolumn{1}{l|}{MLP-PLR} & 0.781 & 0.836 & \multicolumn{1}{c|}{0.460} & 19.818 & 19.485 & 20.152 \\
\multicolumn{1}{l|}{ModernNCA} & 0.783 & 0.846 & \multicolumn{1}{c|}{0.511} & 20.576 & 21.061 & 17.667 \\
\multicolumn{1}{l|}{AutoInt} & 0.769 & 0.826 & \multicolumn{1}{c|}{0.474} & 20.636 & 21.121 & 19.182 \\
\multicolumn{1}{l|}{TabR} & 0.785 & 0.842 & \multicolumn{1}{c|}{0.510} & 21.061 & 20.182 & 16.545 \\
\multicolumn{1}{l|}{NODE} & 0.769 & 0.792 & \multicolumn{1}{c|}{0.352} & 21.182 & 21.939 & 24.970 \\
\multicolumn{1}{l|}{ResNet} & 0.781 & 0.824 & \multicolumn{1}{c|}{0.532} & 21.364 & 22.182 & 16.818 \\
\multicolumn{1}{l|}{FT-Transformer} & 0.770 & 0.803 & \multicolumn{1}{c|}{0.468} & 21.485 & 22.061 & 19.030 \\
\multicolumn{1}{l|}{TabTransformer} & 0.739 & 0.781 & \multicolumn{1}{c|}{0.438} & 21.667 & 21.970 & 19.606 \\
\multicolumn{1}{l|}{DCN-v2} & 0.769 & 0.833 & \multicolumn{1}{c|}{0.482} & 22.333 & 19.909 & 18.606 \\
\multicolumn{1}{l|}{SNN} & 0.755 & 0.818 & \multicolumn{1}{c|}{0.442} & 23.242 & 22.545 & 22.545 \\
\multicolumn{1}{l|}{TabCaps} & 0.742 & 0.837 & \multicolumn{1}{c|}{0.471} & 23.273 & 18.515 & 20.212 \\
\multicolumn{1}{l|}{SwitchTab} & 0.754 & 0.799 & \multicolumn{1}{c|}{0.409} & 24.091 & 25.545 & 22.758 \\
\multicolumn{1}{l|}{DANets} & 0.749 & 0.776 & \multicolumn{1}{c|}{0.453} & 24.273 & 24.303 & 18.727 \\
\multicolumn{1}{l|}{SAINT} & 0.694 & 0.739 & \multicolumn{1}{c|}{0.437} & 25.485 & 25.758 & 22.121 \\
\multicolumn{1}{l|}{TabNet} & 0.709 & 0.789 & \multicolumn{1}{c|}{0.438} & 26.818 & 22.879 & 23.727 \\
\multicolumn{1}{l|}{GrowNet} & 0.646 & 0.674 & \multicolumn{1}{c|}{0.361} & 27.697 & 28.939 & 23.909 \\
\bottomrule
\end{tabular}
\end{table}
\begin{table}[htbp]
\centering
\small
\caption{Regression results on \textbf{TabArena-REG}, sorted by mean R\textsuperscript2 in descending order, with the parameter counts of all foundation models highlighted in blue.}
\label{tab:reg_tabarena}
\begin{tabular}{l|cc|cc}
\toprule
& \multicolumn{4}{c}{\rule{0pt}{18pt}\raisebox{4pt}{TabArena-REG}} \\ \midrule
& \multicolumn{2}{c|}{Mean} & \multicolumn{2}{c}{Rank} \\
\cmidrule(lr){2-3}\cmidrule(lr){4-5}
\multicolumn{1}{l|}{Model} & R\textsuperscript{2} ($\uparrow$) & RMSE ($\downarrow$) & R\textsuperscript{2} ($\downarrow$) & RMSE ($\downarrow$) \\
\midrule
AutoGluon            & 0.791 & 0.414 & 3.462 & 3.462 \\
LimiX-16M \textcolor{blue}{(16.52M)} & 0.796 & 0.406 & 3.462 & 3.462 \\
\rowcolor{gray!30}
LimiX-2M \textcolor{blue}{(1.92M)} & 0.788 & 0.413 & 4.538 & 4.538 \\
TabPFN-v2 \textcolor{blue}{(7.24M)} & 0.777 & 0.422 & 5.923 & 5.923 \\
TabM                 & 0.777 & 0.424 & 6.692 & 6.692 \\
CatBoost             & 0.774 & 0.431 & 7.308 & 7.308 \\
XGBoost              & 0.778 & 0.430 & 7.462 & 7.462 \\
RealMLP              & 0.776 & 0.426 & 8.000 & 8.000 \\
LightGBM             & 0.771 & 0.435 & 8.000 & 8.000 \\
RF                   & 0.758 & 0.456 & 11.000 & 11.000 \\
ET                   & 0.746 & 0.464 & 11.385 & 11.308 \\
TabR                 & 0.729 & 0.476 & 15.231 & 15.231 \\
ExcelFormer          & 0.681 & 0.521 & 15.385 & 15.385 \\
NODE                 & 0.665 & 0.542 & 15.692 & 15.692 \\
TANGOS               & 0.673 & 0.534 & 16.538 & 16.538 \\
DCN-v2               & 0.718 & 0.486 & 16.692 & 16.769 \\
Mitra \textcolor{blue}{(75.67M)} & 0.666 & 0.538 & 17.692 & 17.692 \\
ModernNCA            & 0.712 & 0.496 & 18.462 & 18.462 \\
MLP-PLR              & 0.714 & 0.496 & 18.538 & 18.538 \\
AutoInt              & 0.705 & 0.503 & 19.154 & 19.154 \\
MLP                  & 0.694 & 0.516 & 19.615 & 19.615 \\
ResNet               & 0.687 & 0.521 & 19.769 & 19.769 \\
SAINT                & 0.712 & 0.498 & 20.250 & 20.250 \\
T2G-Former           & 0.677 & 0.526 & 20.538 & 20.538 \\
TabNet               & 0.641 & 0.564 & 20.769 & 20.769 \\
FT-Transformer       & 0.626 & 0.572 & 23.077 & 23.077 \\
DNNR                 & -0.368 & 1.058 & 25.615 & 25.615 \\
SNN                  & 0.451 & 0.729 & 26.692 & 26.692 \\
GrowNet              & 0.316 & 0.814 & 28.385 & 28.385 \\
TabTransformer       & 0.016 & 0.993 & 30.333 & 30.333 \\
DANets               & 0.012 & 0.998 & 30.462 & 30.462 \\
SwitchTab            & 0.004 & 1.002 & 30.923 & 30.923 \\
\bottomrule
\end{tabular}
\end{table}
\begin{table}[htbp]
\centering
\small
\caption{Classification results on \textbf{TabZilla}.}
\label{tab:cls_tabzilla}
\begin{tabular}{l|cccccc}
\toprule
& \multicolumn{6}{c}{\rule{0pt}{20pt}\raisebox{5pt}{TabZilla}} \\ \midrule
& \multicolumn{3}{c|}{Mean} & \multicolumn{3}{c}{Rank} \\
\cmidrule(lr){2-4}\cmidrule(lr){5-7}
\multicolumn{1}{l|}{Model} & AUC ($\uparrow$) & Acc. ($\uparrow$) & \multicolumn{1}{c|}{F1 ($\uparrow$)} & AUC ($\downarrow$) & Acc. ($\downarrow$) & F1 ($\downarrow$) \\
\midrule
\multicolumn{1}{l|}{LimiX-16M \textcolor{blue}{(16.52M)}} & 0.943 & 0.885 & \multicolumn{1}{c|}{0.836} & 5.037 & 5.333 & 6.519 \\
\rowcolor{gray!30}
\multicolumn{1}{l|}{LimiX-2M \textcolor{blue}{(1.92M)}} & 0.938 & 0.883 & \multicolumn{1}{c|}{0.832} & 5.963 & 6.704 & 7.074 \\
\multicolumn{1}{l|}{AutoGluon} & 0.933 & 0.871 & \multicolumn{1}{c|}{0.803} & 7.889 & 8.259 & 9.704 \\
\multicolumn{1}{l|}{TabPFN-v2 \textcolor{blue}{(7.24M)}} & 0.929 & 0.863 & \multicolumn{1}{c|}{0.797} & 8.704 & 9.815 & 10.185 \\
\multicolumn{1}{l|}{TabICL \textcolor{blue}{(27.10M)}} & 0.933 & 0.864 & \multicolumn{1}{c|}{0.803} & 9.704 & 10.556 & 11.444 \\
\multicolumn{1}{l|}{XGBoost} & 0.929 & 0.863 & \multicolumn{1}{c|}{0.789} & 10.111 & 11.407 & 12.778 \\
\multicolumn{1}{l|}{TabM} & 0.928 & 0.869 & \multicolumn{1}{c|}{0.816} & 10.148 & 9.889 & 10.407 \\
\multicolumn{1}{l|}{LightGBM} & 0.927 & 0.863 & \multicolumn{1}{c|}{0.796} & 11.963 & 11.074 & 11.556 \\
\multicolumn{1}{l|}{RF} & 0.924 & 0.852 & \multicolumn{1}{c|}{0.773} & 12.444 & 13.519 & 14.037 \\
\multicolumn{1}{l|}{RealMLP} & 0.923 & 0.872 & \multicolumn{1}{c|}{0.815} & 12.481 & 9.185 & 9.333 \\
\multicolumn{1}{l|}{CatBoost} & 0.922 & 0.848 & \multicolumn{1}{c|}{0.780} & 13.778 & 14.889 & 14.852 \\
\multicolumn{1}{l|}{Mitra \textcolor{blue}{(75.67M)}} & 0.915 & 0.841 & \multicolumn{1}{c|}{0.758} & 15.815 & 15.667 & 16.593 \\
\multicolumn{1}{l|}{ExcelFormer} & 0.915 & 0.861 & \multicolumn{1}{c|}{0.802} & 15.852 & 13.481 & 13.481 \\
\multicolumn{1}{l|}{T2G-Former} & 0.909 & 0.852 & \multicolumn{1}{c|}{0.790} & 15.926 & 16.000 & 16.222 \\
\multicolumn{1}{l|}{ModernNCA} & 0.907 & 0.850 & \multicolumn{1}{c|}{0.794} & 16.000 & 16.593 & 15.778 \\
\multicolumn{1}{l|}{ET} & 0.912 & 0.837 & \multicolumn{1}{c|}{0.745} & 16.370 & 16.704 & 18.000 \\
\multicolumn{1}{l|}{TabR} & 0.904 & 0.853 & \multicolumn{1}{c|}{0.793} & 16.852 & 15.074 & 15.481 \\
\multicolumn{1}{l|}{MLP-PLR} & 0.906 & 0.847 & \multicolumn{1}{c|}{0.773} & 16.889 & 16.519 & 17.630 \\
\multicolumn{1}{l|}{TANGOS} & 0.909 & 0.841 & \multicolumn{1}{c|}{0.776} & 17.000 & 15.519 & 15.963 \\
\multicolumn{1}{l|}{FT-Transformer} & 0.903 & 0.842 & \multicolumn{1}{c|}{0.769} & 17.778 & 17.593 & 17.111 \\
\multicolumn{1}{l|}{MLP} & 0.903 & 0.825 & \multicolumn{1}{c|}{0.747} & 18.111 & 18.407 & 18.889 \\
\multicolumn{1}{l|}{ResNet} & 0.908 & 0.834 & \multicolumn{1}{c|}{0.769} & 18.407 & 17.037 & 16.519 \\
\multicolumn{1}{l|}{AutoInt} & 0.896 & 0.833 & \multicolumn{1}{c|}{0.748} & 18.778 & 17.963 & 18.407 \\
\multicolumn{1}{l|}{DCN-v2} & 0.904 & 0.844 & \multicolumn{1}{c|}{0.781} & 19.481 & 18.407 & 18.889 \\
\multicolumn{1}{l|}{SAINT} & 0.824 & 0.764 & \multicolumn{1}{c|}{0.680} & 21.037 & 20.778 & 20.444 \\
\multicolumn{1}{l|}{SNN} & 0.874 & 0.816 & \multicolumn{1}{c|}{0.706} & 22.519 & 20.556 & 21.778 \\
\multicolumn{1}{l|}{DANets} & 0.881 & 0.800 & \multicolumn{1}{c|}{0.712} & 22.926 & 22.000 & 22.519 \\
\multicolumn{1}{l|}{TabTransformer} & 0.814 & 0.759 & \multicolumn{1}{c|}{0.659} & 23.296 & 21.630 & 22.593 \\
\multicolumn{1}{l|}{SwitchTab} & 0.860 & 0.764 & \multicolumn{1}{c|}{0.660} & 23.741 & 25.519 & 25.111 \\
\multicolumn{1}{l|}{TabCaps} & 0.887 & 0.816 & \multicolumn{1}{c|}{0.729} & 25.296 & 22.481 & 22.815 \\
\multicolumn{1}{l|}{NODE} & 0.869 & 0.784 & \multicolumn{1}{c|}{0.633} & 25.593 & 25.593 & 27.630 \\
\multicolumn{1}{l|}{GrowNet} & 0.829 & 0.732 & \multicolumn{1}{c|}{0.651} & 27.630 & 27.222 & 26.148 \\
\multicolumn{1}{l|}{TabNet} & 0.860 & 0.771 & \multicolumn{1}{c|}{0.668} & 28.593 & 28.000 & 27.556 \\
\bottomrule
\end{tabular}
\end{table}
\begin{table}[htbp]
\centering
\small
\caption{Classification results on \textbf{TALENT-CLS}, sorted by mean AUC in descending order, with the parameter counts of all foundation models highlighted in blue.}
\label{tab:cls_talent}
\begin{tabular}{l|cccccc}
\toprule
& \multicolumn{6}{c}{\rule{0pt}{20pt}\raisebox{5pt}{TALENT-CLS}} \\ \midrule
& \multicolumn{3}{c|}{Mean} & \multicolumn{3}{c}{Rank} \\
\cmidrule(lr){2-4}\cmidrule(lr){5-7}
\multicolumn{1}{l|}{Model} & AUC ($\uparrow$) & Acc. ($\uparrow$) & \multicolumn{1}{c|}{F1 ($\uparrow$)} & AUC ($\downarrow$) & Acc. ($\downarrow$) & F1 ($\downarrow$) \\
\midrule
\multicolumn{1}{l|}{LimiX-16M \textcolor{blue}{(16.52M)}} & 0.903 & 0.861 & \multicolumn{1}{c|}{0.752} & 4.212 & 3.380 & 4.464 \\
\rowcolor{gray!30}
\multicolumn{1}{l|}{LimiX-2M \textcolor{blue}{(1.92M)}} & 0.897 & 0.853 & \multicolumn{1}{c|}{0.734} & 6.067 & 6.006 & 7.525 \\
\multicolumn{1}{l|}{TabICL \textcolor{blue}{(27.10M)}} & 0.894 & 0.845 & \multicolumn{1}{c|}{0.715} & 6.531 & 7.620 & 9.162 \\
\multicolumn{1}{l|}{TabPFN-v2 \textcolor{blue}{(7.24M)}} & 0.895 & 0.850 & \multicolumn{1}{c|}{0.727} & 7.017 & 6.933 & 8.872 \\
\multicolumn{1}{l|}{AutoGluon} & 0.891 & 0.845 & \multicolumn{1}{c|}{0.719} & 7.285 & 7.911 & 8.732 \\
\multicolumn{1}{l|}{XGBoost} & 0.881 & 0.837 & \multicolumn{1}{c|}{0.713} & 10.782 & 10.955 & 11.285 \\
\multicolumn{1}{l|}{TabM} & 0.881 & 0.842 & \multicolumn{1}{c|}{0.719} & 10.961 & 10.486 & 10.257 \\
\multicolumn{1}{l|}{LightGBM} & 0.880 & 0.836 & \multicolumn{1}{c|}{0.713} & 11.117 & 11.302 & 11.609 \\
\multicolumn{1}{l|}{RealMLP} & 0.881 & 0.843 & \multicolumn{1}{c|}{0.726} & 11.749 & 9.693 & 9.067 \\
\multicolumn{1}{l|}{Mitra \textcolor{blue}{(75.67M)}} & 0.882 & 0.834 & \multicolumn{1}{c|}{0.689} & 11.899 & 12.743 & 14.810 \\
\multicolumn{1}{l|}{CatBoost} & 0.876 & 0.828 & \multicolumn{1}{c|}{0.704} & 13.184 & 13.279 & 13.475 \\
\multicolumn{1}{l|}{RF} & 0.877 & 0.828 & \multicolumn{1}{c|}{0.691} & 14.101 & 14.676 & 15.765 \\
\multicolumn{1}{l|}{ET} & 0.875 & 0.821 & \multicolumn{1}{c|}{0.662} & 14.916 & 17.380 & 18.944 \\
\multicolumn{1}{l|}{ExcelFormer} & 0.870 & 0.826 & \multicolumn{1}{c|}{0.699} & 15.441 & 16.128 & 14.911 \\
\multicolumn{1}{l|}{ResNet} & 0.866 & 0.825 & \multicolumn{1}{c|}{0.695} & 16.944 & 16.693 & 14.994 \\
\multicolumn{1}{l|}{FT-Transformer} & 0.859 & 0.822 & \multicolumn{1}{c|}{0.678} & 17.771 & 17.939 & 17.542 \\
\multicolumn{1}{l|}{T2G-Former} & 0.858 & 0.823 & \multicolumn{1}{c|}{0.683} & 17.877 & 17.559 & 17.056 \\
\multicolumn{1}{l|}{TANGOS} & 0.861 & 0.818 & \multicolumn{1}{c|}{0.684} & 18.123 & 18.039 & 16.637 \\
\multicolumn{1}{l|}{MLP} & 0.862 & 0.817 & \multicolumn{1}{c|}{0.675} & 18.514 & 18.156 & 18.240 \\
\multicolumn{1}{l|}{ModernNCA} & 0.861 & 0.825 & \multicolumn{1}{c|}{0.683} & 19.112 & 18.732 & 17.972 \\
\multicolumn{1}{l|}{TabR} & 0.858 & 0.824 & \multicolumn{1}{c|}{0.680} & 19.385 & 17.866 & 16.978 \\
\multicolumn{1}{l|}{DCN-v2} & 0.854 & 0.815 & \multicolumn{1}{c|}{0.662} & 19.894 & 19.978 & 19.715 \\
\multicolumn{1}{l|}{MLP-PLR} & 0.849 & 0.816 & \multicolumn{1}{c|}{0.663} & 20.603 & 19.223 & 19.179 \\
\multicolumn{1}{l|}{DANets} & 0.848 & 0.805 & \multicolumn{1}{c|}{0.654} & 21.251 & 21.000 & 20.559 \\
\multicolumn{1}{l|}{SAINT} & 0.813 & 0.781 & \multicolumn{1}{c|}{0.630} & 22.385 & 21.413 & 20.816 \\
\multicolumn{1}{l|}{AutoInt} & 0.842 & 0.803 & \multicolumn{1}{c|}{0.646} & 22.743 & 23.274 & 22.620 \\
\multicolumn{1}{l|}{SwitchTab} & 0.842 & 0.795 & \multicolumn{1}{c|}{0.637} & 23.480 & 23.972 & 23.123 \\
\multicolumn{1}{l|}{TabCaps} & 0.834 & 0.813 & \multicolumn{1}{c|}{0.654} & 23.536 & 20.313 & 20.737 \\
\multicolumn{1}{l|}{TabTransformer} & 0.832 & 0.790 & \multicolumn{1}{c|}{0.627} & 23.827 & 23.726 & 22.821 \\
\multicolumn{1}{l|}{SNN} & 0.836 & 0.796 & \multicolumn{1}{c|}{0.625} & 24.162 & 23.972 & 24.011 \\
\multicolumn{1}{l|}{NODE} & 0.830 & 0.779 & \multicolumn{1}{c|}{0.570} & 25.022 & 25.296 & 26.754 \\
\multicolumn{1}{l|}{TabNet} & 0.818 & 0.794 & \multicolumn{1}{c|}{0.630} & 27.285 & 24.765 & 25.173 \\
\multicolumn{1}{l|}{GrowNet} & 0.743 & 0.704 & \multicolumn{1}{c|}{0.542} & 29.553 & 28.978 & 26.905 \\
\bottomrule
\end{tabular}
\end{table}
\begin{table}[htbp]
\centering
\small
\caption{Regression results on \textbf{TALENT-REG}, sorted by mean R\textsuperscript2 in descending order, with the parameter counts of all foundation models highlighted in blue.}
\label{tab:reg_talent}
\begin{tabular}{l|cc|cc}
\toprule
& \multicolumn{4}{c}{\rule{0pt}{18pt}\raisebox{4pt}{TALENT-REG}} \\ \midrule
& \multicolumn{2}{c|}{Mean} & \multicolumn{2}{c}{Rank} \\
\cmidrule(lr){2-3}\cmidrule(lr){4-5}
\multicolumn{1}{l|}{Model} & R\textsuperscript{2} ($\uparrow$) & RMSE ($\downarrow$) & R\textsuperscript{2} ($\downarrow$) & RMSE ($\downarrow$) \\
\midrule
LimiX-16M \textcolor{blue}{(16.52M)} & 0.735 & 0.433 & 3.232 & 3.919 \\
AutoGluon            & 0.722 & 0.448 & 5.667 & 5.889 \\
TabM                 & 0.708 & 0.459 & 6.525 & 6.364 \\
TabPFN-v2 \textcolor{blue}{(7.24M)} & 0.695 & 0.465 & 7.061 & 6.980 \\
\rowcolor{gray!30}
LimiX-2M \textcolor{blue}{(1.92M)} & 0.721 & 0.451 & 7.061 & 6.980 \\
RealMLP              & 0.697 & 0.465 & 7.202 & 7.030 \\
XGBoost              & 0.710 & 0.462 & 8.434 & 8.384 \\
LightGBM             & 0.707 & 0.461 & 9.283 & 9.253 \\
ET                   & 0.696 & 0.476 & 10.990 & 10.939 \\
CatBoost             & 0.700 & 0.471 & 11.525 & 11.465 \\
RF                   & 0.697 & 0.474 & 11.596 & 11.596 \\
ExcelFormer          & 0.653 & 0.512 & 14.687 & 14.727 \\
T2G-Former           & 0.656 & 0.512 & 15.384 & 15.323 \\
TabR                 & 0.651 & 0.516 & 16.465 & 16.434 \\
DCN-v2               & -0.361 & 0.818 & 16.677 & 16.727 \\
Mitra \textcolor{blue}{(75.67M)} & 0.602 & 0.547 & 16.889 & 16.909 \\
FT-Transformer       & 0.648 & 0.519 & 16.960 & 16.909 \\
MLP-PLR              & 0.653 & 0.521 & 17.222 & 17.182 \\
ModernNCA            & 0.633 & 0.530 & 17.747 & 17.727 \\
ResNet               & 0.562 & 0.550 & 17.879 & 17.848 \\
TANGOS               & 0.592 & 0.547 & 18.384 & 18.364 \\
MLP                  & 0.556 & 0.564 & 18.768 & 18.808 \\
SAINT                & -1.541 & 0.571 & 19.515 & 19.384 \\
AutoInt              & 0.636 & 0.538 & 20.172 & 20.182 \\
NODE                 & 0.568 & 0.600 & 21.697 & 21.545 \\
TabNet               & 0.576 & 0.586 & 22.869 & 22.818 \\
DNNR                 & -9.172 & 2.528 & 24.525 & 24.566 \\
SNN                  & 0.344 & 0.777 & 26.505 & 26.525 \\
GrowNet              & -0.182 & 0.920 & 28.121 & 28.152 \\
DANets               & 0.005 & 0.998 & 29.434 & 29.455 \\
TabTransformer       & 0.001 & 1.001 & 29.576 & 29.626 \\
SwitchTab            & -0.002 & 1.002 & 29.939 & 29.980 \\
\bottomrule
\end{tabular}
\end{table}
\begin{table}[htbp]
\centering
\small
\caption{Regression results on \textbf{CTR23}, sorted by mean R\textsuperscript2 in descending order, with the parameter counts of all foundation models highlighted in blue.}
\label{tab:reg_ctr23}
\begin{tabular}{l|cc|cc}
\toprule
& \multicolumn{4}{c}{\rule{0pt}{18pt}\raisebox{4pt}{CTR23}} \\ \midrule
& \multicolumn{2}{c|}{Mean} & \multicolumn{2}{c}{Rank} \\
\cmidrule(lr){2-3}\cmidrule(lr){4-5}
\multicolumn{1}{l|}{Model} & R\textsuperscript{2} ($\uparrow$) & RMSE ($\downarrow$) & R\textsuperscript{2} ($\downarrow$) & RMSE ($\downarrow$) \\
\midrule
LimiX-16M \textcolor{blue}{(16.52M)} & 0.745 & 0.477 & 4.545 & 4.667 \\
AutoGluon            & 0.725 & 0.497 & 5.939 & 5.848 \\
RealMLP              & 0.721 & 0.494 & 6.333 & 6.303 \\
TabM                 & 0.719 & 0.494 & 6.515 & 6.545 \\
\rowcolor{gray!30}
LimiX-2M \textcolor{blue}{(1.92M)} & 0.730 & 0.495 & 7.636 & 7.667 \\
TabPFN-v2 \textcolor{blue}{(7.24M)} & 0.716 & 0.503 & 8.485 & 8.394 \\
XGBoost              & 0.712 & 0.511 & 9.818 & 9.879 \\
LightGBM             & 0.706 & 0.516 & 10.848 & 10.939 \\
ET                   & 0.697 & 0.535 & 11.939 & 12.000 \\
CatBoost             & 0.700 & 0.528 & 12.394 & 12.394 \\
RF                   & 0.694 & 0.539 & 12.818 & 12.788 \\
ExcelFormer          & 0.665 & 0.556 & 13.848 & 13.939 \\
T2G-Former           & 0.674 & 0.544 & 15.061 & 15.030 \\
DCN-v2               & 0.670 & 0.545 & 15.788 & 15.848 \\
ResNet               & 0.645 & 0.587 & 15.879 & 15.939 \\
FT-Transformer       & 0.667 & 0.549 & 15.909 & 16.000 \\
TabR                 & 0.671 & 0.543 & 16.182 & 16.091 \\
MLP-PLR              & 0.672 & 0.553 & 16.909 & 16.939 \\
ModernNCA            & 0.667 & 0.550 & 17.091 & 16.788 \\
MLP                  & 0.608 & 0.623 & 17.121 & 17.121 \\
SAINT                & 0.654 & 0.561 & 17.939 & 17.848 \\
Mitra \textcolor{blue}{(75.67M)} & 0.624 & 0.583 & 17.939 & 18.061 \\
TANGOS               & 0.642 & 0.586 & 18.121 & 18.152 \\
AutoInt              & 0.655 & 0.568 & 18.970 & 18.939 \\
NODE                 & 0.568 & 0.666 & 21.697 & 21.545 \\
TabNet               & 0.605 & 0.623 & 21.818 & 21.879 \\
DNNR                 & -2.969 & 1.651 & 23.909 & 23.909 \\
SNN                  & 0.369 & 0.834 & 26.758 & 26.788 \\
GrowNet              & 0.185 & 0.944 & 28.667 & 28.636 \\
DANets               & 0.001 & 1.052 & 30.000 & 30.000 \\
TabTransformer       & 0.000 & 1.053 & 30.030 & 30.030 \\
SwitchTab            & -0.006 & 1.057 & 31.091 & 31.091 \\
\bottomrule
\end{tabular}
\end{table}

\subsection{Inference Speed Comparison}
We evaluate inference efficiency using a synthetic classification dataset comprising 900 samples and 60 features. All results are reported as the average of three runs on an AMD EPYC 9354 CPU and an NVIDIA RTX 4090 GPU (Table \ref{tab:inference_time}). LimiX-2M achieves a remarkable inference time of 171.40 ms on GPU, outperforming TabPFN-v2 by $\approx$2$\times$ and TabICL by $>10\times$. Furthermore, LimiX-2M maintains a substantial lead in CPU environments against heavy models like Mitra, confirming its practical deployability.
\begin{table}[htbp]
    \centering
    \caption{Inference time comparison in milliseconds (ms). The experiments were conducted using an \textbf{AMD EPYC 9354 32-Core Processor} for CPU evaluation and an \textbf{NVIDIA GeForce RTX 4090} for GPU evaluation. The best results are highlighted in \textbf{bold}.}
    \label{tab:inference_time}
    \begin{tabular}{lrr}
        \toprule
        \textbf{Model} & \textbf{CPU (ms)} & \textbf{GPU (ms)} \\
        \midrule
        TabPFN-v2 & 51950.08 & 352.60 \\
        LimiX-16M     & 68447.99 & 368.08 \\
        TabICL    & 22161.85 & 1749.61 \\
        Mitra     & 124453.05 & 5766.25 \\
        LimiX-2M     & \textbf{17257.34} & \textbf{171.40} \\
        \bottomrule
    \end{tabular}
\end{table}



\subsection{Fine-grained Dataset-level Comparison}
\label{appendix:dataset-level comparison}
We conducted a fine-grained dataset-level comparison on the TabArena (classification) and CTR23 (regression) benchmarks to evaluate the number of datasets where each model achieves the leading performance. Our comparison set includes established baselines such as TabPFN-v2, TabICL, Mitra, XGBoost, and CatBoost.

Notably, we exclude LimiX from this specific visualization. Since LimiX achieves state-of-the-art results on the vast majority of datasets, including it would obscure the relative performance dynamics between LimiX-2M and the other baselines. 

As shown in \Cref{fig:cpoa_baseline2m_tabarena,fig:cpoa_LimiX-2M_tabarena,fig:cpoa_baseline2m_ctr23,fig:cpoa_LimiX-2M_ctr23}, Baseline-2M rivals XGBoost but slightly trails TabICL and TabPFN-v2, whereas LimiX-2M consistently surpasses these baselines.

\begin{figure}[t]
    \centering
    \includegraphics[width=0.7\linewidth]{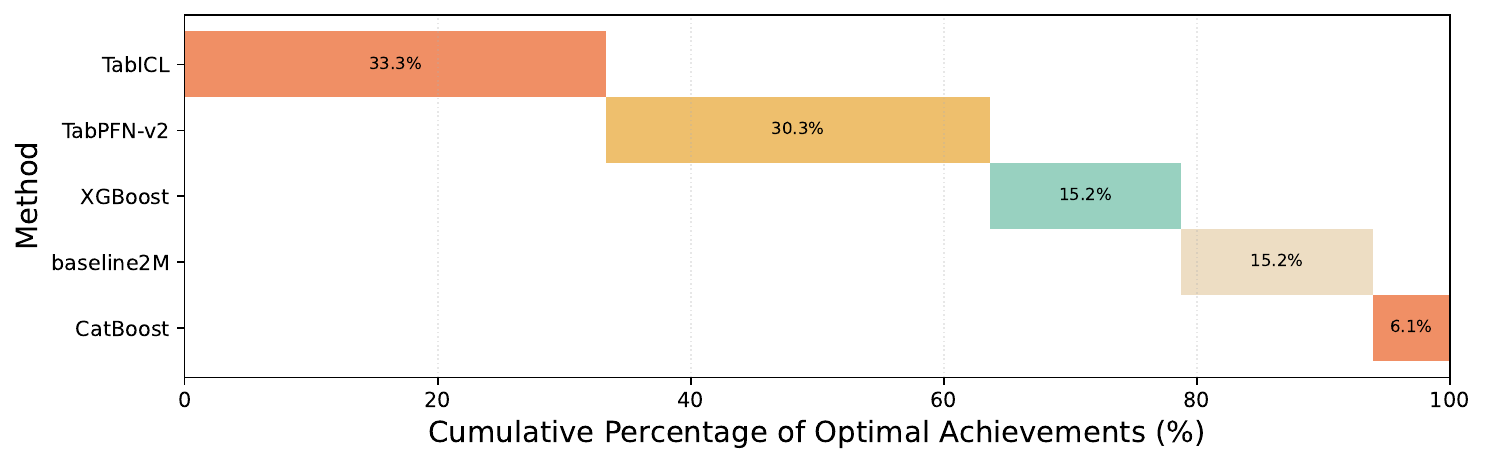}
    \caption{Cumulative Percentage of Optimal Achievements of Baseline2M on TabArena.}
    \label{fig:cpoa_baseline2m_tabarena}
\end{figure}

\begin{figure}[htbp]
    \centering
    \includegraphics[width=0.7\linewidth]{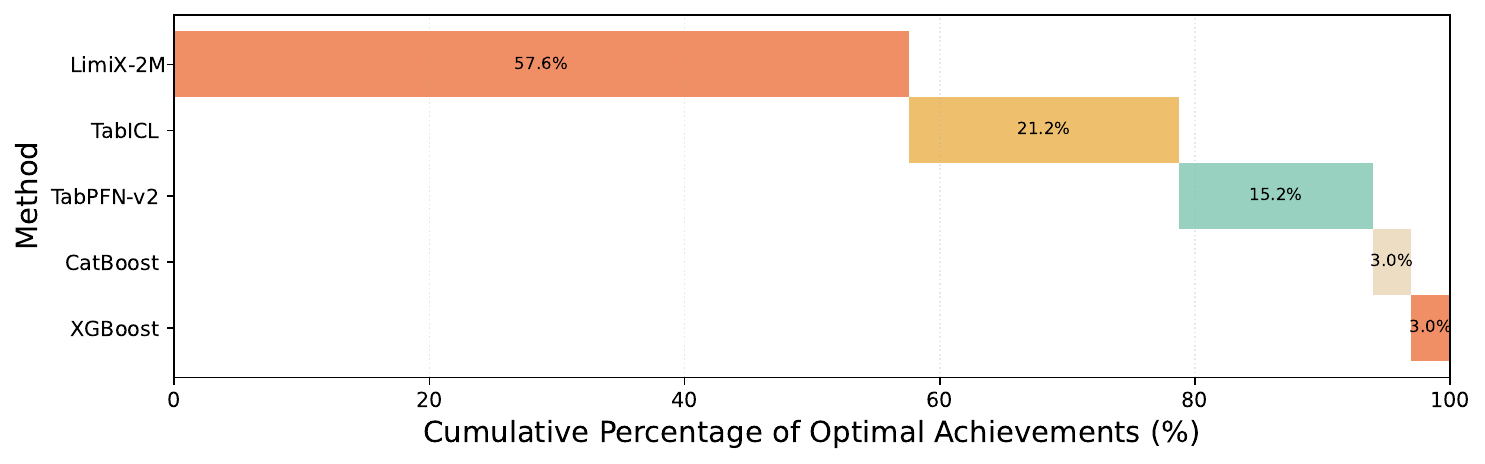}
    \caption{Cumulative Percentage of Optimal Achievements of LimiX-2M on TabArena.}
    \label{fig:cpoa_LimiX-2M_tabarena}
\end{figure}

\begin{figure}[htbp]
    \centering
    \includegraphics[width=0.7\linewidth]{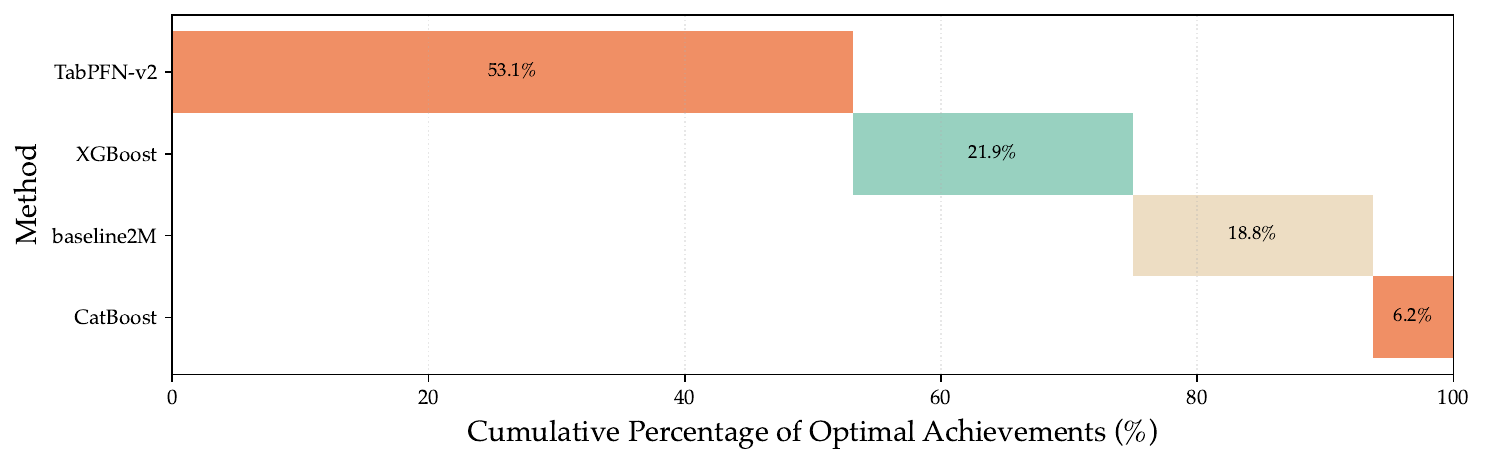}
    \caption{Cumulative Percentage of Optimal Achievements of Baseline2M on CTR23.}
    \label{fig:cpoa_baseline2m_ctr23}
\end{figure}

\begin{figure}[htbp]
    \centering
    \includegraphics[width=0.7\linewidth]{figs/tabarena_LimiX-2M.pdf}
    \caption{Cumulative Percentage of Optimal Achievements of LimiX-2M on CTR23.}
    \label{fig:cpoa_LimiX-2M_ctr23}
\end{figure}


\end{document}